\renewcommand{\paragraph}{%
  \@startsection{paragraph}{4}{\z@}%
  {0em}
  {-1em}
  {\normalfont\normalsize\bfseries}
}
\let\oldparagraph\paragraph
\renewcommand{\paragraph}[1]{\oldparagraph{#1\@addpunct{.}}}
\algrenewcommand\algorithmicrequire{\textbf{Input:}}  
\algrenewcommand\algorithmicensure{\textbf{Output:}}  
\declaretheorem[name=Theorem]{theorem}
\declaretheorem[name=Lemma, numberlike=theorem]{lemma}
\declaretheorem[name=Proposition, numberlike=theorem]{proposition}
\declaretheorem[name=Corollary, numberlike=theorem]{corollary}
\declaretheorem[name=Definition]{definition}
\newcommand{\proofof}[1]{\textnormal{\textbf{Proof of \Cref{#1}}}}
\Crefname{observation}{Observation}{Observations}
\Crefname{example}{Example}{Examples}
\Crefname{lemma}{Lemma}{Lemmas}
\Crefname{theorem}{Theorem}{Theorems}
\Crefname{proposition}{Proposition}{Propositions}
\Crefname{corollary}{Corollary}{Corollaries}
\Crefname{section}{Section}{Sections}
\Crefname{appendix}{Appendix}{Appendices}
\newaliascnt{ineq}{equation}      
\crefname{ineq}{Inequality}{Inequalities}      
\newenvironment{inequality}
  {$$\refstepcounter{ineq}}      
  {\eqno \hbox{\@eqnnum}$$\@ignoretrue} 
\DeclarePairedDelimiter\abs{\lvert}{\rvert}%
\DeclarePairedDelimiter\norm{\lVert}{\rVert}
\DeclareMathOperator*{\argmax}{arg\,max}
\newcommand{\inst}{\mathsf{CBR}}
\newcommand{\NameEmailOne}[2]{%
  \begin{tabular}[t]{@{}c@{}}
    \textbf{#1}\textsuperscript{1}\\[-0.1em]
    \texttt{#2}
  \end{tabular}
}
\title{Bandits with Single-Peaked Preferences and Limited Resources}
\author{%
  \NameEmailOne{Omer Ben-Porat}{omerbp@technion.ac.il}\hspace{2em}%
  \NameEmailOne{Gur Keinan}{gur.keinan@campus.technion.ac.il}\vspace{1em}\\%
  \NameEmailOne{Rotem Torkan}{torkan.rotem@campus.technion.ac.il}
  \vspace{1em}\\[0.8em]%
  \textsuperscript{1}\,Technion---Israel Institute of Technology
}
\begin{document}

\maketitle

\begin{abstract}
    We study an online stochastic matching problem in which an algorithm sequentially matches $U$ users to $K$ arms, aiming to maximize cumulative reward over $T$ rounds under budget constraints. Without structural assumptions, computing the optimal matching is NP-hard, making online learning computationally infeasible. To overcome this barrier, we focus on \emph{single-peaked preferences}---a well-established structure in social choice theory, where users' preferences are unimodal with respect to a common order over arms. We devise an efficient algorithm for the offline budgeted matching problem, and leverage it into an efficient online algorithm with a regret of $\tilde O(UKT^{2/3})$. Our approach relies on a novel PQ tree-based order approximation method. If the single-peaked structure is known, we develop an efficient UCB-like algorithm that achieves a regret bound of $\tilde O(U\sqrt{TK})$.
\end{abstract}

\section{Introduction}\label{sec:introduction}

Modern recommendation systems often face the challenge of personalization at scale---learning individual user preferences while simultaneously satisfying global resource allocation constraints. To illustrate, consider a content platform that must decide which content creators to commission daily, where each creator has a different cost and produces ephemeral content on specific topics. Each user has preferences over all creators' content styles and topics. After commissioning a subset of creators that fit the platform's budget, it matches each user to content from one of these creators, where the same creator's content can be recommended to multiple users. The challenge lies in learning individual user preferences for each creator's content while selecting which creators to commission and how to assign their content to maximize user satisfaction.

This problem fits the combinatorial multi-armed bandit framework, where the decision-maker must choose structured action sets~\citep{pmlr-v28-chen13a}, such as assigning each user to an item. The goal is to maximize cumulative reward, or equivalently, minimize regret by balancing exploration and exploitation. Unfortunately, combinatorial problems like the one in the example above are NP-complete even for offline settings. Therefore, traditional approaches settle for weaker notions of $\alpha$-regret~\citep{pmlr-v28-chen13a}, competing against the best efficient computable solution rather than the optimal one. This compromise can be unsatisfying, especially when the solution space is highly structured or the stakes of poor decisions are high.

In this paper, we circumvent this computational barrier by focusing on \emph{single-peaked} (hereinafter SP) preferences---a structure that has been extensively studied in social choice theory since the seminal work of \citet{black1948rationale}. Specifically, it implies that there is an order of the arms such that each user's utility is unimodal. SP preferences appear naturally in numerous domains: voters' preferences over political candidates along an ideological spectrum, consumers' preferences for products varying in a single attribute, and users' preferences for content recommendations based on genre similarity. The single-peaked property has proven helpful in circumventing impossibility results in voting theory \citep{arrow1950difficulty} and transforming NP-hard problems into polynomial-time solvable ones \citep{faliszewski2009shield, elkind2016structuredichotomouspreferences}.

\subsection{Our Contributions}

We study a stochastic combinatorial bandit problem over $T$ rounds with $K$ arms and $U$ users. Each arm $k$ has a cost $c_k$, and the learner may select any subset of arms within a budget $B$, then assign users to the selected arms. Each user derives a stochastic satisfaction from each arm, and the goal is to maximize total user satisfaction. As the general case is NP-complete, we focus on SP preferences (see \Cref{{def:single-peaked-def}}), which enable efficient solutions. Our main contributions are as follows.

\paragraph{SP instances are statistically challenging}
To demonstrate that SP preferences do not trivialize the learning problem, we show that the worst-case regret remains unchanged compared to general preferences. Specifically, we prove that even when the SP order is known, a regret of $\Omega(U\sqrt{TK})$, which matches the lower bound for general preferences, is unavoidable. Furthermore, even if the user peaks are known as well, every algorithm incurs a regret of $\Omega(\max\{U\sqrt{T},\sqrt{TK}\})$.

\paragraph{Efficient offline algorithm.} We develop \textsc{SP-Matching}, an efficient algorithm that optimally solves the budget-constrained matching problem in $O(K^2B + K^2 U)$ time when preferences are SP.

\paragraph{Learning with known SP structure}
We consider the case of known SP order and user peaks (but not the cardinal preferences). Such an assumption is justified if, for example, preferences rely on a known ideological spectrum where user peaks correspond to stated or inferred inclinations. We design an efficient UCB-based algorithm, termed \textsc{MvM}, that achieves $\tilde{O}(U\sqrt{TK})$ regret. The key property used in our approach is the existence of a \emph{maximal preferences matrix} in the confidence set of all plausible SP matrices---namely, a matrix that element-wise dominates all others in the set. We use this maximal matrix as the basis for optimistic matching using our offline algorithm.

\paragraph{Learning with unknown SP structure}
We also consider the more challenging case, where the SP structure is unknown. We introduce an explore-then-commit algorithm, named \textsc{EMC}, that devotes an initial exploration phase to collecting reward estimates. Then, a novel PQ tree-based procedure \citep{BOOTH1976335}, which we refer to as \textsc{Extract-Order}, recovers an approximate SP order from these estimates. Given this order, the reward estimates are projected onto a nearby SP matrix, and the corresponding offline matching is solved using \textsc{SP-Matching}; the resulting matching is committed to for the remainder of the rounds. A careful approximation and concentration analysis yields a regret bound of $\tilde{O}(UKT^{2/3})$, while all computational steps run in $\mathrm{poly}(U, K, B)$ time.

\subsection{Related Work}

Our research intersects the combinatorial and matching threads of the multi-armed bandits literature, as well as single-peaked preferences from social choice theory. 

\paragraph{Multi-armed bandits.} The multi-armed bandit framework provides the foundation for sequential decision-making under uncertainty \citep{bubeck2012regret, slivkins2019introduction, lattimore2020bandit}. This setting has been extended to the combinatorial domain, where actions are solutions of a combinatorial optimization problem \citep{NIPS2015_0ce2ffd2, pmlr-v38-kveton15}. While efficient learning algorithms exist for certain combinatorial structures \citep{neu2016importance}, many practical instances involve computationally intractable optimization problems. To address this computational barrier, prior work introduced the notion of $\alpha$-regret \citep{pmlr-v28-chen13a}, which compares the learner's performance against the best efficiently computable solution rather than the true optimum. This approach has led to the development of general frameworks for achieving $\alpha$-regret guarantees in computationally hard settings \citep{rizk2022an, nie2023framework}. Notably, the single-peaked structure we impose enables us to circumvent computational intractability entirely, allowing us to achieve standard regret bounds rather than settle for $\alpha$-regret. Our setting further incorporates per-round budget constraints, which align with the cardinality and knapsack-type restrictions studied in prior works \citep{nie2022explore, nie2023framework}. A closely related line of work considers bandit learning for matching problems \citep{das2005two, liu2020competing, kong2022thompsonsamplingbanditlearning, kong2024improved}. Another perspective arises in content recommendation, which can be naturally viewed as a matching problem between users and creators. This has motivated a line of research on strategic content providers \citep{pmlr-v119-mladenov20a, hron2023modeling, 10.1145/3589334.3645353}. Particularly, \citet{10.1145/3543507.3583320} study a variant where arms become unavailable if they are not selected sufficiently often, and their proposed algorithm has runtime exponential in $K$. In contrast, in our setting, arms remain continuously available (subject to budget constraints), and we devise efficient algorithms.

\paragraph{Single-peaked preferences.} The study of single-peaked preferences dates back to seminal work in social choice theory \citep{black1948rationale, arrow1950difficulty}. From a computational perspective, several problems have been investigated, including recognizing whether a given preferences profile admits a single-peaked order and constructing such an order when it exists \citep{bartholdi1986stable, escoffier2008single}. While recognition and construction of single-peaked orders can be solved efficiently, the problem becomes computationally intractable when preferences are nearly but not perfectly single-peaked. Specifically, finding an order that minimizes various distance measures from single-peakedness has been shown to be NP-hard \citep{faliszewski20c11complexity, elkind2014detecting, escoffier2021measuring}. In \Cref{sec:unknown-structure}, we contribute to this literature by developing a technique to extract single-peaked orders from approximately single-peaked cardinal preferences.

The single-peaked assumption has profound implications across different domains. In social choice theory, it circumvents classical impossibility results and enables positive outcomes that fail under general preferences. Most notably, Black's median voter theorem \citep{black1948rationale} guarantees the existence of a Condorcet winner under single-peaked preferences, resolving a fundamental challenge in voting theory. From an algorithmic perspective, single-peakedness has been shown to simplify computational challenges: numerous voting problems that are NP-hard under general preferences become polynomial-time solvable when restricted to single-peaked structures \citep{faliszewski2009shield, elkind2016structuredichotomouspreferences}. Our work reveals a similar phenomenon in an entirely distinct setting. We demonstrate that in our cardinal matching model with budget constraints, the single-peaked structure enables efficient optimization for problems that are computationally intractable under general preferences.

\section{Problem Definition}\label{sec:problem-definition}
In this section, we formally introduce our model and notation (\Cref{subsec:model}), address the case of general preferences (\Cref{subsec:general-preferences-overview}), and focus on single-peaked preferences (\Cref{subsec:single-peak-intro}).

\subsection{Model}\label{subsec:model}
We formalize the \textbf{C}onstrained \textbf{B}andit \textbf{R}ecommendation problem ($\inst$ for brevity) as follows. An instance of $\inst$ consists of the tuple $(T, U, K, (D_{u,k})_{u,k}, (c_k)_k, B)$, whose components are described below.
The problem involves $U$ users indexed by $[U] = \{1,\ldots,U\}$ and $K$ items (arms) indexed by $[K] = \{1,\ldots,K\}$. The satisfaction of each user $u$ with an item $k$ is stochastic and stationary over time, drawn from a distribution $D_{u,k}$ supported on $[0,1]$ with expected value $\theta_{u,k}$. We denote the expected reward matrix $\Theta \in [0,1]^{U \times K}$ by setting its $(u,k)$-th entry to $\theta_{u,k}$. The distributions are assumed to be independent across users and arms.

The goal of the learner is to recommend arms to users so as to maximize their overall satisfaction. Interaction proceeds over $T$ rounds. In each round $t \in [T]$, the learner chooses a \emph{matching} $\pi_t:[U] \to [K]$, namely a function assigning each user a single item. The expected value of the matching is the total expected satisfaction of the users with their assigned arms, $V(\pi_t; \Theta) = \sum_{u \in [U]} \theta_{u,\pi_t(u)}$.

However, not all matchings are feasible.  The algorithm has a budget constraint on the selected arms: each selected arm (i.e., there is at least one user $u$ such that $\pi_t(u)=k$) has a cost of $c_k$, and the total cost of the selected arms in every round should remain below a known budget $B \in \mathbb{N}$. Formally, for a matching $\pi$, let  $\text{Im}(\pi) = \{ k \in [K]: \exists u \in [U] \text{ s.t. } \pi(u)=k \}$. Hence, a matching is feasible only if $\sum_{k \in \text{Im}(\pi_t)} c_k \leq B$. We denote the set of all feasible matchings as $\Pi$. Without loss of generality, we assume that $c_k \leq B$ for all $k \in [K]$, since otherwise such arms could never be selected. As in the motivating example of \Cref{sec:introduction}, the platform pays the content creators (arms) per creation, so multiple users can be matched to the same arm without incurring additional cost.

At the beginning of each round $t$, the learner has access to the history of past actions and observed rewards, denoted by $H_t$. Upon selecting a matching $\pi_t$, a random reward $r^t_{u,\pi_t(u)} \sim D_{u,\pi_t(u)}$ is generated for each pair $(u,\pi_t(u))$. We assume semi-bandit feedback, meaning the learner observes $r^t_{u,\pi_t(u)}$ for every matched pair $(u,\pi_t(u))$. The performance of the learner $\mathcal A$ on a $\inst$ instance $\mathcal I$ is measured using the standard notion of regret,
\[
    R_T(\mathcal A, \mathcal I) = T \cdot \max_{\pi \in \Pi} V(\pi; \Theta) - \mathbb{E}\left[\sum_{t=1}^T \sum_{u \in [U]} r^t_{u,\pi_t(u)}\right],
\]
where the expectation is over the algorithm’s randomness and the realized rewards. As is standard in bandit settings, we assume $T \geq K, U$. As a useful intermediate step, we also consider the offline version of the problem, where the algorithm is given the expected reward matrix $\Theta$ and must compute the optimal feasible matching, i.e., $\argmax_{\pi \in \Pi} V(\pi; \Theta)$.

\subsection{Warm Up: General Preferences}\label{subsec:general-preferences-overview}

Without structural assumptions on $\Theta$, achieving low regret with polynomial time algorithms is hopeless. In particular, we establish the following hardness result based on a reduction from the maximum coverage problem \citep{10.1145/285055.285059}:

\begin{restatable}{theorem}{ThmOfflineNPHard}
    \label{thm:offline-np-hard}
    It is NP-hard to approximate $\displaystyle \argmax_{\pi \in \Pi} V(\pi; \Theta)$ within any factor better than $(1 - 1/e)$.
\end{restatable}

Consequently, since an efficient online algorithm with sublinear regret would imply an efficient approximation algorithm for the offline problem, such an online algorithm cannot exist. We defer the full analysis to \Cref{sec:general-preferences}. If we disregard computational constraints, optimal regret bounds can be attained by, e.g., applying the \textsc{CUCB} algorithm~\citep{pmlr-v28-chen13a}, assuming access to an exact optimization oracle. Formally,
\begin{restatable}{corollary}{CorInefficientOptimal}\label{cor:inefficient-online-regret}
    There exists a UCB-based algorithm that achieves a regret of $O(U\sqrt{KT\log{T}})$, which is optimal up to logarithmic factors.
\end{restatable}
While this algorithm is statistically optimal, it assumes an optimization oracle that must solve an NP-hard problem at every time step, rendering it computationally impractical.


The literature typically addresses cases where the offline benchmark is intractable using the notion of $\alpha$-regret~\citep{pmlr-v28-chen13a, nie2022explore, nie2023framework}. This metric uses the  best efficiently computable approximation as the baseline, rather than the true optimum. Although it is computationally feasible to achieve sublinear $\alpha$-regret in our setting (e.g., with $\alpha = 0.5$ using greedy oracle), this is a weaker guarantee compared to the standard regret. This limitation motivates our assumption of structural properties for $\Theta$, which enables efficient optimization without sacrificing regret guarantees.

\subsection{Single-Peaked Preferences}\label{subsec:single-peak-intro}

Single-peaked preferences arise naturally in numerous applications and, as we show, this structure enables polynomial-time algorithms to achieve sublinear regret. We define it formally as follows.

\begin{definition}[PSP matrix]\label{def:perfect-single-peaked-def}
    We say that a matrix $\Theta$ is  \emph{perfectly single-peaked} (PSP for brevity) if for every user $u \in [U]$, there exists an index $p(u;\Theta) \in [K]$ such that
    \begin{inequality}\label{ineq:single-peaked-def-unimodal}
        \theta_{u,1} \leq \cdots \leq \theta_{u,p(u; \Theta)} \geq \cdots \geq \theta_{u,K}.
    \end{inequality}
\end{definition}

In other words, the entries of every row are unimodal. The index $p(u;\Theta)$ is called the \emph{peak} of user $u$, or simply $p(u)$ when $\Theta$ is clear from context; if multiple indices attain the maximum, we fix an arbitrary choice.\footnote{See \Cref{sec:peak-ties} for a discussion of how our algorithms and analyses handle such ties.} More generally,

\begin{definition}[SP matrix and SP order]\label{def:single-peaked-def}
    We say that a matrix $\Theta$ is \emph{single-peaked} (or SP) if there exists a (total) order $\prec$ on the arms such that
    the matrix obtained by permuting the columns of $\Theta$ according to $\prec$ is PSP. In such a case, we say that $\prec$ is an \emph{SP order} of $\Theta$.
\end{definition}

Note that any PSP matrix is also SP, with the identity order serving as its SP order. In contexts where the relevant order matters, we shall write that $\Theta$ is SP w.r.t.\ $\prec$. Furthermore, we say that a $\inst$ instance is (P)SP if its expected reward matrix $\Theta$ is (P)SP. The SP order need not be unique; for example, the reverse order of any SP order is also an SP order.

\begin{figure}[t]
  \centering
  \begin{subfigure}[t]{0.48\linewidth}
    \centering
    \begin{tikzpicture}[scale=.80, baseline=(current bounding box.north)]\small
    \draw[->] (0,0) -- (6.0,0) node[below] {arm};
    \draw[->] (0,0) -- (0,4.5) node[above] {reward};
    \node[left] at (0,0) {0};
    \node[left] at (0,4) {1};
    \node[below] at (1,0) {$1$};
    \node[below] at (2,0) {$2$};
    \node[below] at (3,0) {$3$};
    \node[below] at (4,0) {$4$};
    \node[below] at (5,0) {$5$};
    \draw[blue, thick, smooth,
      mark=*,
      mark size=2.5pt,
      mark options={fill=blue}] plot coordinates {
            (1,1.2) (2,3.6) (3,2) (4,2.4) (5,2.8)
        };
    \draw[red, thick, smooth,
      mark=triangle*,
      mark size=3.5pt,
      mark options={fill=red}] plot coordinates {
            (1,0.4) (2,2.4) (3,1.0) (4,2.2) (5,3.8)
        };
    \draw[green!70!black, thick, smooth,
      mark=square*,
      mark size=2.5pt,
      mark options={fill=green!70!black}] plot coordinates {
            (1,3.4) (2,2.6) (3,0.6) (4,1.2) (5,1.8)
        };
    \node[blue] at (0.5,1.3) {\small $u_2$};
    \node[red] at (0.5,0.5) {\small $u_3$};
    \node[green!70!black] at (0.5,3.5) {\small $u_1$};
    \end{tikzpicture}
    \caption{An SP instance
    }
    \label{subfig:single-peaked-not-ordered}
  \end{subfigure}
  \hfill
  \begin{subfigure}[t]{0.48\linewidth}
    \centering
    \begin{tikzpicture}[scale=.80, baseline=(current bounding box.north)]\small
    \draw[->] (0,0) -- (6.0,0) node[below] {arm};
    \draw[->] (0,0) -- (0,4.5) node[above] {reward};
    \node[left] at (0,0) {0};
    \node[left] at (0,4) {1};
    \node[below] at (1,0) {$1$};
    \node[below] at (2,0) {$2$};
    \node[below] at (3,0) {$5$};
    \node[below] at (4,0) {$4$};
    \node[below] at (5,0) {$3$};
    \draw[blue, thick, smooth,
      mark=*,
      mark size=2.5pt,
      mark options={fill=blue}] plot coordinates {
            (1,1.2) (2,3.6) (3,2.8) (4,2.4) (5,2)
        };
    \draw[red, thick, smooth,
      mark=triangle*,
      mark size=3.5pt,
      mark options={fill=red}] plot coordinates {
            (1,0.4) (2,2.4) (3,3.8) (4,2.2) (5,1.0)
        };
    \draw[green!70!black, thick, smooth,
      mark=square*,
      mark size=2.5pt,
      mark options={fill=green!70!black}] plot coordinates {
            (1,3.4) (2,2.6) (3,1.8) (4,1.2) (5,0.6)
        };
    \node[blue] at (0.5,1.3) {$u_2$};
    \node[red] at (0.5,0.5) {$u_3$};
    \node[green!70!black] at (0.5,3.5) {$u_1$};
    \end{tikzpicture}
    \caption{A PSP instance}
    \label{subfig:single-peaked-ordered}
  \end{subfigure}  
  \caption{
  Illustration of PSP and SP instances. Each subfigure consists of curves representing the expected rewards of user-arm pairs.
  The instance in \Cref{subfig:single-peaked-not-ordered} is SP, since if we reorder the arms, we get the PSP instance in \Cref{subfig:single-peaked-ordered}
  \label{fig:single-peaked-demonstration}.}
\end{figure}
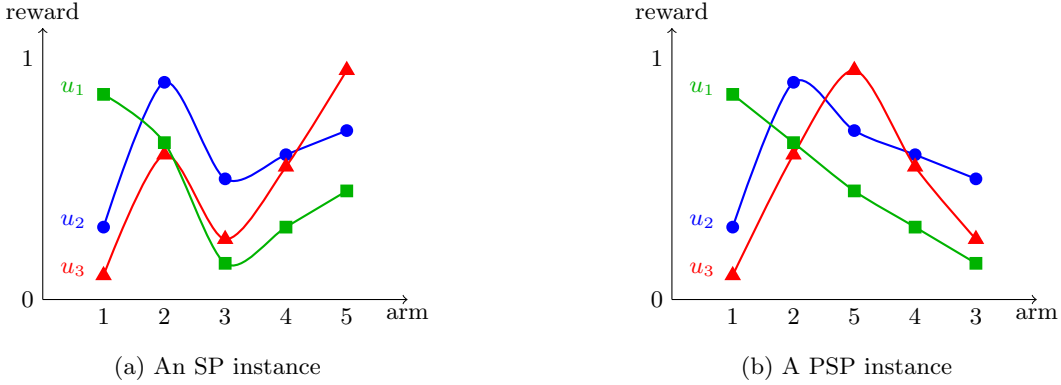

To illustrate the PSP and SP properties, consider \Cref{fig:single-peaked-demonstration}. The instance comprises three users and five arms, and each subfigure illustrates the expected rewards for each user. The instance in \Cref{subfig:single-peaked-not-ordered} is not PSP, since there are ``valleys'' that contradict \Cref{ineq:single-peaked-def-unimodal}. If we reorder the arms according to the order $\prec$ such that $1\prec 2 \prec 5 \prec 4 \prec 3$, we obtain \Cref{subfig:single-peaked-ordered}. Indeed, we observe a unimodal preference shape for each user, indicating that the matrix is PSP. Thus, the instance in \Cref{subfig:single-peaked-not-ordered} is SP and $\prec$ is its SP order.

\section{Statistical Hardness of Single-Peaked Instances}
\label{sec:sp-statistical-hardness}

While SP instances are highly structured, they remain statistically as challenging to learn. To demonstrate this, we establish a regret lower bound for SP instances of $\Omega(U\sqrt{TK})$, which matches the guarantee of \Cref{cor:inefficient-online-regret} for general preferences. Moreover, we provide a comparable lower bound for a more lenient case, where both the SP order and the users' peaks are known to the learner in advance. We do so by constructing families of hard instances where the known structural properties are fixed, and only the cardinal values differ. Since the structural information is identical across instances, the learner cannot leverage it to distinguish between them. We present these lower bounds in \Cref{thm:sp-statistical-hardness}.

\begin{restatable}{theorem}{ThmSPStatisticalHardness}
    \label{thm:sp-statistical-hardness}
    For any algorithm, the worst-case regret over PSP instances is $\Omega(\max\{U\sqrt{T},\sqrt{TK}\})$ when user peaks are known, and $\Omega(U\sqrt{TK})$ when user peaks are unknown.
\end{restatable}

\Cref{thm:sp-statistical-hardness} indicates that the SP assumption does not simplify the statistical aspect of the learning problem. However, as we demonstrate in the subsequent sections, it does affect the computational aspect--it enables efficient learning algorithms that achieve sublinear regret, standing in sharp contrast to the intractability of the general case discussed in \Cref{subsec:general-preferences-overview}.

\section{Offline Algorithm for SP instances}\label{sec:offline-algorithm}

In this section, we present an efficient and optimal algorithm for computing $\argmax_{\pi \in \Pi} V(\pi; \Theta)$ for SP instances. W.l.o.g., we shall assume that the SP order is the identity order, i.e., the instance is PSP. Indeed, an SP order can be extracted using our \textsc{Extract-Order} algorithm (see \Cref{alg:extract-order}) with $\varepsilon=0$ in $O(UK^2)$ time, after which the columns can be reordered accordingly.

The main observation used by our algorithm relates to the optimal matching conditioned on a given subset of arms. Namely, we show that if the set of selected arms is $S$, the optimal matching assigns each user to the arm in $S$ that is closest to their peak. We formalize this in \Cref{lem:matching-given-subset}.

\begin{restatable}{lemma}{LemSPOfflineMatchingGivenSubset}\label{lem:matching-given-subset}
    Fix any PSP matrix $\Theta$ with peaks $p(\cdot)$, and any arm subset $S = \{k_1, \ldots, k_m\} \subseteq K$ with $k_1 < \cdots < k_m$. Let $\pi^\star \in \argmax_{\pi,\, \mathrm{Im}(\pi) \subseteq S} V(\pi;\Theta)$. For any user $u$, if $k_j \leq p(u) \leq k_{j+1}$ for some $j$ with $1 \leq j < m$, then $\pi^\star(u) \in \{k_j, k_{j+1}\}$; otherwise, $\pi^\star(u) \in \{k_1, k_m\}$.
\end{restatable}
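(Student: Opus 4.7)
The plan is to decouple the matching problem across users and then invoke unimodality on each row of $\Theta$. The key initial observation is that, once we restrict to matchings with $\text{Im}(\pi) \subseteq S$, the objective $V(\pi;\Theta) = \sum_{u \in [U]} \theta_{u,\pi(u)}$ separates into a sum of independent per-user terms: each user's contribution depends only on $\pi(u) \in S$, with no interaction between users (the budget constraint has already been absorbed into the choice of $S$, and multiple users may share the same arm without extra cost). Therefore, an optimal $\pi^\star$ can be chosen pointwise as $\pi^\star(u) \in \argmax_{k \in S} \theta_{u,k}$ for each user $u$ independently, and the lemma reduces to identifying this per-user maximizer.

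The second step is to locate this maximum using the PSP structure of $\Theta$. Fix $u$ and suppose first that $k_j \leq p(u) \leq k_{j+1}$ for some $1 \leq j < m$. For any arm $k_i \in S$ with $i \leq j$, we have $k_i \leq k_j \leq p(u)$, so the non-decreasing segment of \Cref{ineq:single-peaked-def-unimodal} gives $\theta_{u,k_i} \leq \theta_{u,k_j}$. Symmetrically, for any $k_i \in S$ with $i \geq j+1$, we have $p(u) \leq k_{j+1} \leq k_i$, and the non-increasing segment gives $\theta_{u,k_i} \leq \theta_{u,k_{j+1}}$. Taking the maximum over $k \in S$ therefore yields $\max_{k \in S} \theta_{u,k} = \max\{\theta_{u,k_j}, \theta_{u,k_{j+1}}\}$, so some element of $\{k_j, k_{j+1}\}$ attains the per-user optimum. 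For the boundary cases $p(u) < k_1$ or $p(u) > k_m$, the entire set $S$ lies on one side of the peak, so the row of $\Theta$ restricted to $S$ is monotone by the same unimodality inequality, and the maximizer is $k_1$ or $k_m$ respectively, which is consistent with the stated set $\{k_1, k_m\}$.

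A minor point to address is ties: the per-user $\argmax$ may in principle contain arms outside $\{k_j, k_{j+1}\}$, but the inequalities above show that any such tying arm achieves exactly the same value as $k_j$ or $k_{j+1}$, so we may always select an optimal $\pi^\star$ satisfying the claim without loss of generality. I do not foresee a substantive technical obstacle: the result follows directly from the user-wise separability of the objective combined with the unimodality guaranteed by the PSP assumption; the only care needed is correctly handling the two boundary cases and ties in the per-user argmax.
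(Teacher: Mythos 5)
Your proposal is correct and follows essentially the same route as the paper's proof: decouple the objective per user (which the paper leaves implicit) and then use the non-decreasing/non-increasing segments of the unimodal row to show every arm in $S$ is dominated by $k_j$ or $k_{j+1}$ (or by $k_1$/$k_m$ in the boundary cases). Your explicit treatment of ties is a small bonus; the paper defers that discussion to a separate remark.
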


\Cref{lem:matching-given-subset} guides the design of a dynamic programming-based approach for finding the optimal matching for a given PSP matrix $P$, costs $c(\cdot)$, and budget $B$. We call this algorithm \textsc{SP-Matching} and explain its essence here, deferring full details to \Cref{sec:deferred-proofs-offline-algorithm}. The algorithm maintains a table $F(k,b)$ representing the maximum achievable reward when arm $k$ is the rightmost \emph{selected} arm, the total budget is at most $b$, and we consider only users $u$ such that $p(u) \leq k$. We add an auxiliary degenerate arm $0$ such that $c_0=0$ and $P_{u,0}=0$ for every user $u$, and initialize $F(0,b)=0$ for every $b\in \{0\} \cup [B]$.

The recurrence relation for $F(k, b)$ follows from \Cref{lem:matching-given-subset}.
When $k$ is the arm with the highest index among the selected arms and $i$ is the second highest, users with peaks before $i$ are unaffected by the inclusion of arm $k$, while those with peaks between $i$ and $k$ are assigned to whichever of the two arms offers a higher reward. Consequently, $F(k, b)$ is given by
\[
    \max_{\substack{i: 0 \leq i < k, \\ b \geq c_i + c_k}} \left[ F(i, b - c_k) + \sum_{u: i < p(u) \leq k} \max\{P_{u,i}, P_{u,k}\} \right].
\]
This maximum is always well-defined since arm $0$ always satisfies its constraint. The optimal value is then $V^\star = \max_{k \in [K]} \left[ F(k, B) + \sum_{u : p(u) > k} P_{u,k} \right]$,
where users with peaks beyond all selected arms are assigned to the last selected arm $k$. To enable efficient computation, we precompute for all pairs $(i, k)$ the sums $\sum_{u: i < p(u) \leq k} \max\{P_{u,i}, P_{u,k}\}$ in $O(K^2U)$ time. This allows each entry in $F$ to be filled in $O(K)$ time, and since $F$ has $KB$ entries, the total runtime is $O(K^2U + K^2B)$.


\begin{restatable}{theorem}{ThmSinglePeakOffline}\label{thm:sp-matching}
    For any PSP matrix, \textsc{SP-Matching} finds an optimal matching in time $O(K^2(U + B))$.
\end{restatable}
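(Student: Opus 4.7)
The plan is to establish correctness via a DP invariant and then separately bound the runtime. The invariant I would prove by induction on $k$ is that $F(k,b)$ equals the maximum of $\sum_{u:\, p(u)\leq k} \max_{j \in S} P_{u,j}$ over subsets $S \subseteq [k]$ with $k \in S$ and $\sum_{j \in S} c_j \leq b$. The base case $k=0$ is immediate because no user has $p(u)\leq 0$, so the empty sum matches $F(0,b)=0$. For the inductive step, fix an optimal such $S$ and let $i = \max(S \setminus \{k\})$, with the convention $i=0$ when $S = \{k\}$. By \Cref{lem:matching-given-subset}, users with peaks in $(i,k]$ are assigned to $\argmax\{P_{u,i}, P_{u,k}\}$ (the two arms of $S$ sandwiching their peak), contributing exactly $\sum_{u: i < p(u) \leq k} \max\{P_{u,i}, P_{u,k}\}$. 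Users with $p(u)\leq i$ depend only on $S \cap [i]$, whose optimal contribution is $F(i, b - c_k)$ by the induction hypothesis. The dummy convention $P_{u,0}=0$ handles $i=0$ cleanly: no user has $p(u)\leq 0$, so the $F(0,\cdot)$ term vanishes, while $\max\{P_{u,0}, P_{u,k}\} = P_{u,k}$ correctly assigns every user with $p(u)\leq k$ to arm $k$, matching the lemma's verdict when $S=\{k\}$.

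Given the invariant, I would conclude correctness of $V^\star$ by noting that any feasible matching $\pi$ has a rightmost selected arm $k^\star = \max \text{Im}(\pi)$. Users with $p(u)\leq k^\star$ contribute at most $F(k^\star, B)$, and users with $p(u)>k^\star$ must be assigned to $k^\star$ by the ``otherwise'' clause of \Cref{lem:matching-given-subset}, contributing $\sum_{u:\, p(u)>k^\star} P_{u,k^\star}$. Maximizing over $k^\star \in [K]$ yields $V^\star$, so the algorithm's output matches $\max_{\pi \in \Pi} V(\pi;\Theta)$.

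For the runtime, the three dominant components are: (i) precomputing $g(i,k) := \sum_{u:\, i < p(u) \leq k} \max\{P_{u,i}, P_{u,k}\}$ for all $O(K^2)$ pairs at cost $O(U)$ each, giving $O(K^2 U)$; (ii) filling the $O(KB)$ entries of $F$, where each entry requires a maximum over $O(K)$ predecessors $i$, totalling $O(K^2 B)$; and (iii) precomputing $\sum_{u:\, p(u)>k} P_{u,k}$ for each $k$ and the final outer maximization, in $O(KU)$. Summing yields $O(K^2 U + K^2 B)$. I expect the main obstacle to be the boundary bookkeeping in the invariant: verifying that the right boundary (users with $p(u)>k^\star$) is absorbed by the outer maximization rather than inside the DP, and that the left boundary is correctly handled by the dummy arm $0$ together with the convention $P_{u,0}=0$ without introducing spurious contributions or double-counting.
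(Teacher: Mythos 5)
Your proposal is correct and follows essentially the same route as the paper's proof: an induction establishing that $F(k,b)$ equals the optimal restricted value (the paper's $\mathrm{OPT}(k,b)$), with the recurrence justified by \Cref{lem:matching-given-subset}, the dummy arm $0$ handling the left boundary, the outer maximization absorbing users with peaks beyond the rightmost selected arm, and an identical runtime decomposition. Your subset-based phrasing of the invariant is a slightly more explicit restatement of the paper's definition, but the argument is the same.
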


\section{Online Algorithm for the Known SP Structure Regime}\label{sec:known-structure}

We now turn to the more challenging online setup. Although the underlying expected reward matrix is SP, our reward estimates are generally not guaranteed to preserve this structure. For motivation, suppose the true reward matrix is $\Theta$, but we only have access to a noisy estimate $\bar \Theta$. If $\bar \Theta$ is itself SP, we could apply known methods to extract its SP order~\citep{bartholdi1986stable, escoffier2008single} and use the \textsc{SP-Matching} algorithm to obtain an approximately optimal matching w.r.t. $\Theta$. However, since $\bar \Theta$ only approximates $\Theta$, $\bar \Theta$ may not be SP. Moreover, the SP order of $\Theta$, which is crucial for our offline algorithm, remains unknown and may not be recoverable directly from $\bar \Theta$. Therefore, leveraging the fact that the instance is SP is non-trivial.

In this section, we take a first step towards resolving this challenge by leveraging additional structural information of the problem. Specifically, we assume that both the SP order and user peaks are known in advance to the learner. Recall from \Cref{sec:sp-statistical-hardness} that despite this knowledge, the learning problem remains statistically non-trivial. These structural assumptions, however, enable an efficient, optimistic UCB-style algorithm that achieves $\tilde{O}(U\sqrt{TK})$ regret. Our approach involves the construction of optimistic reward estimates using \emph{confidence sets} \citep[Chapter 8.4]{slivkins2019introduction}, as detailed in \Cref{subsec:confidence-sets}, while preserving the single-peaked structure. Then, we use those estimates to design the algorithm described in \Cref{subsec:known-order-algorithm}.

\subsection{Confidence Sets and Maximal Matrix}\label{subsec:confidence-sets}

Given the known SP order $\prec$ and user peaks $p(\cdot)$, we construct confidence sets for the expected reward matrix that contain the true parameters with high probability. Intuitively, these sets capture all matrices that are statistically plausible given the data observed so far, and naturally shrink as more data is collected. We begin by establishing notation for standard confidence bounds. Given a history $H_t$ at round $t$, we denote by $\bar{\theta}_{u,k}(t)$ and $n_{u,k}(t)$ the empirical mean reward and the number of pulls of the pair $(u,k)$ observed up to round $t$, respectively. The upper confidence bound is defined as $\textnormal{UCB}_{u,k}(t) = \bar{\theta}_{u,k}(t) + \sqrt{\nicefrac{2 \ln T}{n_{u,k}(t)}}$, with the lower confidence bound $\textnormal{LCB}_{u,k}(t)$ defined analogously using subtraction. We formally define the confidence set as follows.
\begin{definition}[Confidence set]\label{def:confidence-set}
    The \emph{confidence set} $\mathcal{C}^t=\mathcal{C}^t(\prec, p, H_t)$
    contains a matrix $P$ if and only if it is consistent with $\prec$ and $p(\cdot)$, and for any $u, k$ it holds that $\textnormal{LCB}_{u,k}(t) \leq P_{u,k} \leq \textnormal{UCB}_{u,k}(t)$.
\end{definition}

The standard UCB approach requires picking the optimistic matching at every time $t$, namely
\begin{equation}\label{eq:optimistic-matching-over-all-matrices-in-confidence-set}
    \pi_t \in  \argmax_{\pi \in \Pi} \max_{P \in \mathcal{C}^t} V(\pi; P).
\end{equation}

This discrete optimization problem over a convex set may be tricky, as different matrices yield different optimal matchings. However, a structural property of $\mathcal{C}^t$ simplifies this optimization problem. As we demonstrate shortly in \Cref{lem:maximal-matrix-in-confidence-set}, $\mathcal{C}^t$ includes a unique \emph{maximal} matrix $P^t$ in an element-wise sense; thus, to find the solution of \Cref{eq:optimistic-matching-over-all-matrices-in-confidence-set}, we should only find the optimal matching w.r.t. $P^t$.

\begin{restatable}{lemma}{LemMaximalMatrixInConfidenceSet}\label{lem:maximal-matrix-in-confidence-set}
    For any non-empty confidence set $\mathcal{C}^t(\prec, p, H_t)$, there exists a unique element-wise maximal matrix $P^t \in \mathcal{C}^t$ such that $P^t_{u,k} \geq P_{u,k}$ for all $P \in \mathcal{C}^t$ and all $u \in [U], k \in [K]$. Furthermore, this matrix is given by $P^t_{u,k} =
        \begin{cases}
            \min_{i: k \preceq i \preceq p(u)} \textnormal{UCB}_{u,i}(t), & k \preceq p(u) \\
            \min_{i: p(u) \preceq i \preceq k} \textnormal{UCB}_{u,i}(t), & p(u) \preceq k
        \end{cases}$.
\end{restatable}

We refer to the above matrix as the \emph{maximal matrix} of $\mathcal{C}^t$, and note that its structure relies on knowledge of both the order $\prec$ and the peaks $p(\cdot)$.

\subsection{The Match-via-Maximal Algorithm}\label{subsec:known-order-algorithm}

\begin{algorithm}[t]
    \caption{Match-via-Maximal (\textsc{MvM})}
    \label{alg:match-via-max}
    \begin{algorithmic}[1]
        \Require Order $\prec$, peaks $p^\star(\cdot)$
        \For{$t=1$ \textbf{to} $T$}
            \State Construct the maximal matrix $P^t$
            \State Select $\pi_t =$ \textsc{SP-Matching}$(P^t)$
            \State Observe rewards and update history $H_t$
        \EndFor
    \end{algorithmic}
\end{algorithm}

Next, we present a combinatorial UCB-based algorithm, which we refer to as \textsc{MvM} (\Cref{alg:match-via-max}). \textsc{MvM} acts optimistically--in each round, it selects the optimal matching w.r.t. the corresponding maximal matrix. The regret analysis of \textsc{MvM} involves standard UCB-like arguments. By the construction of the confidence sets and applying Hoeffding's inequality with a union bound over all $(u,k)$ pairs and time steps, the actual expected reward matrix $\Theta$ is in $\mathcal{C}^t$ for all $t \in [T]$ with probability at least $1 - \frac{2KU}{T^3}$. Employing the clean event technique, we ignore the complementary (bad) event, which occurs with low probability. In every round $t$, we have
\begin{inequality}\label{ineq:maximal-matrix-single-round-regret}
    \max_{\pi \in \Pi} V(\pi; \Theta) - V(\pi_t; \Theta) \leq V(\pi_t; P^t) - V(\pi_t; \Theta) \leq \sum_{u \in [U]} 2 \sqrt{\nicefrac{2 \ln T}{n_{u,\pi_t(u)}}}.
\end{inequality}
Summing Inequality~\eqref{ineq:maximal-matrix-single-round-regret} over all rounds, we obtain the following theorem:
\begin{restatable}{theorem}{ThmMvMRegret}
    \label{thm:match-via-max-regret}
    For any SP instance with known SP order~$\prec$ and peaks $p(\cdot)$, \textsc{MvM} achieves regret of at most $O(U\sqrt{TK \ln T})$ with per-round runtime of $O(K^2U + K^2B)$.
\end{restatable}

\paragraph{Extension to partial structural knowledge}
The \textsc{MvM} algorithm naturally extends to settings where the SP order and user peaks are not known exactly, but are known to belong to a polynomial-sized set $\mathcal{S}$ of candidate structures. For each candidate $({\prec}, p) \in \mathcal{S}$, one can construct the corresponding maximal matrix $P^t_{(\prec, p)}$ via \Cref{lem:maximal-matrix-in-confidence-set} and compute its optimal matching $\pi^t_{(\prec, p)}$ using \textsc{SP-Matching}. The algorithm then selects the matching achieving the highest optimistic value: $\pi_t \in \argmax_{(\prec, p) \in \mathcal{S}} V(\pi^t_{(\prec, p)}; P^t_{(\prec, p)})$. Since the true structure belongs to $\mathcal{S}$, the selected matching is at least as optimistic as the one corresponding to the true parameters, and the same regret analysis yields $\tilde{O}(U\sqrt{TK})$ regret. The per-round runtime becomes $O(|\mathcal{S}| \cdot (K^2 U + K^2 B))$, which remains polynomial when $|\mathcal{S}| = \mathrm{poly}(U, K, B)$.

\section{Online Algorithm for the Unknown SP Structure Regime}\label{sec:unknown-structure}

We now turn to the more general setting, where the learner knows only that $\Theta$ is SP, but lacks knowledge of the specific underlying order or user peaks. When the order is unknown, the confidence set must include all matrices consistent with \emph{any} valid SP order, and the optimistic approach from \Cref{sec:known-structure} can no longer be applied efficiently; we discuss this barrier formally in \Cref{subsec:hardness-optimistic-unknown-order}.

To tackle this challenge, we take a three-step approach. First, we define the concept of approximate single-peaked matrices that relaxes the strict SP condition while remaining amenable to analysis. Second, we present an efficient procedure to extract a plausible SP order from empirical data. Finally, we combine these tools in an explore-then-commit algorithm that achieves sublinear regret. A key insight underlying our approach is that due to estimation noise, we cannot hope to recover the exact SP order of the true matrix $\Theta$; instead, we aim to find \emph{some} order under which the empirical estimates are approximately single-peaked, which suffices for near-optimal matching.

\subsection{Approximate Single-Peaked Matrices}
We begin our analysis by defining the concept of approximately single-peaked (ASP) matrices.

\begin{definition}\label{def:asp}
    We say a matrix $P$ is \emph{$\delta$-approximately single-peaked} (or $\delta$-ASP) w.r.t. an order $\prec$ if for every  $i,j,l \in [K]$ such that $i \prec j \prec l$ and user $u \in U$, it holds that $P_{u,j} \geq \min\{P_{u,i}, P_{u,l}\} - \delta$.
\end{definition}

If $P$ is $\delta$-ASP w.r.t. $\prec$, we say $\prec$ is its \emph{$\delta$-ASP order}, analogously to SP order. The parameter $\delta$ quantifies the tolerable violation of the single-peaked property. When $\delta = 0$, this condition is equivalent to the matrix being SP w.r.t. $\prec$, as we prove in the following proposition.

\begin{restatable}{proposition}{PropSPiffZeroASP}
    \label{prop:sp-iff-zero-asp}
    A matrix is SP w.r.t. an order $\prec$ if and only if it is $0$-ASP w.r.t. $\prec$.
\end{restatable}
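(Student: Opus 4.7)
The plan is to prove both implications directly from the definitions, using the fact that SP w.r.t.\ $\prec$ means the rows, read in the order $\prec$, are unimodal. Throughout, fix an arbitrary user $u$ and an arbitrary triple $i \prec j \prec l$; since both definitions quantify independently over users and triples, it suffices to work pointwise.

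For the ``only if'' direction, I would assume $\Theta$ is SP w.r.t.\ $\prec$ with peak $p(u)$ and establish \Cref{ineq:no-delta-valleys-condition} with $\delta = 0$ by splitting on where $p(u)$ sits relative to the triple $(i,j,l)$. When $p(u) \preceq i$, all three values lie on the decreasing side, so $P_{u,j} \geq P_{u,l} \geq \min\{P_{u,i}, P_{u,l}\}$. When $l \preceq p(u)$, symmetrically $P_{u,j} \geq P_{u,i}$. In the mixed cases ($i \prec p(u) \preceq j$ or $j \preceq p(u) \prec l$), the index $j$ is on the same side of the peak as at least one of $i,l$, so unimodality yields $P_{u,j} \geq P_{u,i}$ or $P_{u,j} \geq P_{u,l}$, each of which dominates $\min\{P_{u,i}, P_{u,l}\}$. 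A compact way to write this is to observe that in every case, $j$ is weakly closer to $p(u)$ (along $\prec$) than at least one of $i$ or $l$, and unimodality implies that values are monotone in distance-to-peak on each side.

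For the ``if'' direction, I would assume $\Theta$ is $0$-ASP w.r.t.\ $\prec$ and define $p(u) \in \argmax_{k} P_{u,k}$ (fixing an arbitrary tiebreak). I need to show that the row values are non-decreasing up to $p(u)$ and non-increasing after, along $\prec$. Suppose, for contradiction, that there exist $i \prec j \preceq p(u)$ with $P_{u,i} > P_{u,j}$. If $j \prec p(u)$, apply $0$-ASP to the triple $(i, j, p(u))$: we get $P_{u,j} \geq \min\{P_{u,i}, P_{u,p(u)}\} = P_{u,i}$ (since $p(u)$ is the row-maximum), contradicting $P_{u,i} > P_{u,j}$. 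If $j = p(u)$, then $P_{u,i} > P_{u,p(u)}$ directly contradicts maximality of $p(u)$. A symmetric argument handles the decreasing side using a triple $(p(u), j, l)$ with $p(u) \preceq j \prec l$.

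No step is really a ``main obstacle''; the only mild subtlety is making sure the boundary cases ($j = p(u)$, $i = p(u)$, and so on) are handled cleanly, which I would manage by using weak inequalities $\preceq$ in the case split. The argument does not use any structural lemma beyond \Cref{def:perfect-single-peaked-def,def:single-peaked-def} and the defining inequality \Cref{ineq:no-delta-valleys-condition}, so the proof should occupy at most a half page.
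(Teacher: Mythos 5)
Your proposal is correct and follows essentially the same route as the paper's proof: the forward direction uses unimodality to show $j$ dominates whichever of $i,l$ lies on its side of the peak, and the reverse direction applies the $0$-ASP inequality to triples involving $p(u)=\argmax_k P_{u,k}$ and uses $\min\{P_{u,i},P_{u,p(u)}\}=P_{u,i}$. The only cosmetic differences are that you organize the case split by the peak's position relative to the triple rather than by $j$'s position relative to the peak, and you phrase the reverse direction by contradiction instead of directly.
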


The $\delta$-ASP condition proves useful beyond the strict $\delta = 0$ case. In fact, it follows directly from \Cref{def:asp} that if $P$ is SP w.r.t.~$\prec$ and $\tilde P$ satisfies $\norm{P- \tilde P}_{\infty} \leq \delta$, then $\tilde P$ is $2\delta$-ASP w.r.t.~$\prec$. This implies that a noisy estimate of an SP matrix remains ASP. Conversely, given any $\delta$-ASP matrix and its ASP order, we can construct a matrix that is (exact) SP with respect to the same order, with each entry differing from the original by at most $\delta$.

\begin{restatable}{lemma}{LemASPToNearSP}\label{lem:asp-to-near-sp}
    Let $\tilde P$ be a $\delta$-ASP matrix w.r.t. $\prec$. There exists a matrix $P$ which is SP w.r.t. $\prec$, and satisfies $\norm{P- \tilde P}_{\infty} \leq \delta$.
\end{restatable}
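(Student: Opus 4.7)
Since both the SP condition and the $\delta$-ASP condition decompose row by row (each involves only entries within a single row of the matrix), it suffices to construct each row of $P$ independently. Without loss of generality, relabel the arms so that $\prec$ is the identity order on $[K]$. Fix a user $u$ and write $v_k \coloneqq \tilde P_{u,k}$ for the row values.

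The plan is to define $P_{u,\cdot}$ as a carefully chosen unimodal envelope of $v$. Let $p \in \argmax_{k \in [K]} v_k$ be any maximizer of the row, and set
\[
P_{u,k} \;=\; \begin{cases} \min\{v_j : k \leq j \leq p\}, & k \leq p,\\[2pt] \min\{v_j : p \leq j \leq k\}, & k > p. \end{cases}
\]

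The first step is to verify that $P_{u,\cdot}$ is unimodal with peak at $p$. As $k$ increases within $[1,p]$, the minimization range $\{k,\ldots,p\}$ shrinks, so $P_{u,k}$ is non-decreasing; symmetrically, as $k$ increases within $[p,K]$, the range $\{p,\ldots,k\}$ grows, so $P_{u,k}$ is non-increasing. The two pieces agree at $k=p$, yielding $P_{u,p}=v_p$, so the row satisfies \Cref{ineq:single-peaked-def-unimodal} and hence $P$ is SP w.r.t.~$\prec$.

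The main step is the $\ell_\infty$ closeness bound. The inequality $P_{u,k} \leq v_k$ is immediate because $k$ itself lies in the minimization range. For the reverse direction, consider $k \leq p$ (the case $k>p$ is symmetric) and let $j^\star \in \{k,\ldots,p\}$ attain the minimum. If $j^\star = k$ then $P_{u,k} = v_k$, and if $j^\star = p$ then $P_{u,k} = v_p \geq v_k$ by the choice of $p$. Otherwise $k < j^\star < p$, and applying the $\delta$-ASP property, \Cref{ineq:no-delta-valleys-condition}, with $(i,j,l) = (k, j^\star, p)$ gives
\[
P_{u,k} \;=\; v_{j^\star} \;\geq\; \min\{v_k, v_p\} - \delta \;=\; v_k - \delta,
\]
where the last equality uses $v_p \geq v_k$. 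Thus $|P_{u,k} - v_k| \leq \delta$, completing the per-row argument. Repeating the construction over all users yields the desired $P$.

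The only non-routine ingredient is guessing the right envelope; once it is written down, unimodality follows from monotonicity of min over nested sets, and the closeness bound is a one-line invocation of the ASP property with the three indices $(k, j^\star, p)$. I do not anticipate any substantive obstacle beyond arriving at this construction.
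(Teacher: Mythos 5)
Your proof is correct and follows essentially the same strategy as the paper's: fix a row, take the peak to be an argmax of $\tilde P_{u,\cdot}$, write down an explicit unimodal envelope, and bound the perturbation by a single application of the $\delta$-ASP inequality. The only difference is a dual choice of envelope --- you use the lower envelope $P_{u,k}=\min_{j}\{\tilde P_{u,j}: j \text{ between } k \text{ and } p\}$ (so $P\le\tilde P$ entrywise and ASP gives $P\ge\tilde P-\delta$), whereas the paper uses the upper envelope given by a running $\max$ from the boundary toward the peak (so $P\ge\tilde P$ and ASP gives $P\le\tilde P+\delta$); both are valid and equally elementary.
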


We prove \Cref{lem:asp-to-near-sp} by construction, which can be implemented in $O(UK)$ time. 
\Cref{lem:asp-to-near-sp} enables us to work with SP matrices even when the estimate matrix is not SP, while not sacrificing the approximation quality. However, \Cref{lem:asp-to-near-sp} requires the ASP order $\prec$, which we do not have.

\subsection{Extract-Order Procedure}

\begin{algorithm}[t]
    \caption{Extract-Order}
    \label{alg:extract-order}
    \begin{algorithmic}[1]
        \Require $\tilde P \in \mathbb{R}^{U \times K}$, $\varepsilon > 0$ \label{line:extract-order:input}
        \Ensure An order $\prec$ s.t. $\tilde P$ is $2K\varepsilon$-ASP, if such exists.
        \State Initialize PQ tree $T$ on $[K]$ \label{line:extract-order:init-PQ tree}
        \For{$u \in [U]$} \label{line:extract-order:loop-over-users}
            \State Let $k^u_1,\dots, k^u_K$ be s.t $\tilde P_{u,k_1^u} \geq \cdots \geq \tilde P_{u,k_K^u}$ \label{line:extract-order:sort-preferences}
            \For{$i \in [K-1]$ s.t. $\tilde P_{u,k_i^u} - \tilde P_{u,k_{i+1}^u} > 2\varepsilon$} \label{line:extract-order:check-gap}
                \State Add the constraint $\{k_1^u, \ldots, k_i^u\}$ to $T$ \label{line:extract-order:apply-reduction}
            \EndFor
        \EndFor
        \If{There exists some feasible order in $T$} \label{line:extract-order:check-feasible-order}
            \State \Return some feasible order $\prec$ \label{line:extract-order:return-order}
        \Else
            \State \Return \textbf{fail} \label{line:extract-order:fail}
        \EndIf
    \end{algorithmic}
\end{algorithm}

Relating to the previous subsection, even if we know that the estimate matrix $\bar \Theta$ is ASP w.r.t. the SP order of $\Theta$, this order is still unknown. In what follows, we develop the \textsc{Extract-Order} algorithm, which extracts \emph{an} ASP order from $\bar \Theta$ (not necessarily the SP order of $\Theta$).

We start with some intuition. Finding an order boils down to solving a set of contiguity constraints---subsets of arms that must be contiguous in any valid order. To illustrate, fix an SP matrix $P$ and suppose there exists a user $u$ and a partition of $[K]$ into two sets $A_1$ and $A_2$. Let $\beta = \min_{k \in A_1} P_{u,k}$ and $\gamma = \max_{k \in A_2} P_{u,k}$. If $\beta - \gamma > 0$, then every arm in $A_1$ is preferred by user $u$ over every arm in $A_2$. In this case, any order $\prec$ that places an arm $j \in A_2$ between two arms $i,l \in A_1$ violates the 0-ASP condition (\Cref{def:asp}). Hence, according to \Cref{prop:sp-iff-zero-asp}, $\prec$ cannot be an SP order of $P$. Thus, any SP order of $P$ places the arms in $A_1$ in a contiguous block. As a result, any order that satisfies all such contiguity constraints is an SP order of $P$.

Next, assume we have a matrix $\tilde P$ such that $\norm{P- \tilde P}_{\infty} \leq \varepsilon$ for the SP matrix $P$. Since every entry in $\tilde P$ only approximates $P$, we cannot hope to discover all the contiguity constraints of $P$. For instance, if $\min_{k \in A_1} \tilde P_{u, k} = \beta-\varepsilon$ and $\max_{k \in A_2} \tilde P_{u, k} = \gamma+\varepsilon$ for the user $u$ and partition $A_1, A_2$, then $\min_{k \in A_1} \tilde P_{u, k} - \max_{k \in A_2} \tilde P_{u, k} =\beta-\varepsilon - \gamma - \varepsilon$. In the case that this expression is negative, we would not discover the contiguity constraint of $A_1$. Similarly, we might enforce incorrect constraints. Thus, we should aim to discover as many contiguity constraints as possible from $\tilde P$, while remaining consistent with the contiguity constraints of $P$. To that end, if a triplet $u, A_1$ and $A_2$ satisfies
\begin{inequality}\label{ineq:contiguity-constraints-between-two-groups-soft-version}
    \min_{k \in A_1} \tilde P_{u, k} - \max_{k \in A_2} \tilde P_{u, k}>2\varepsilon,
\end{inequality}
we can safely add a contiguity constraint on $A_1$ since the proximity between $\tilde P$ and $P$ guarantees  $\min_{k \in A_1} P_{u, k} - \max_{k \in A_2}  P_{u, k}> 0$; hence, this constraint also applies to the matrix $P$.

The \textsc{Extract-Order} algorithm, described in \Cref{alg:extract-order}, formalizes this approach using a PQ tree \citep{BOOTH1976335}, an efficient data structure that specializes in encoding and resolving contiguity constraints. PQ trees maintain sets of items that must appear consecutively in any valid order, and can be used both to determine whether a consistent order exists and to efficiently construct such an order. \textsc{Extract-Order} initializes a PQ tree (Line~\ref{line:extract-order:init-PQ tree}), and for each user $u$, it sorts the arms by preference values (Lines~\ref{line:extract-order:loop-over-users}-\ref{line:extract-order:sort-preferences}). Whenever there is a significant gap (exceeding $2\varepsilon$) between consecutive preference values, which exactly corresponds to the case described in \Cref{ineq:contiguity-constraints-between-two-groups-soft-version}, it adds a contiguity constraint requiring all higher-valued arms to appear consecutively (Line~\ref{line:extract-order:apply-reduction}). Finally, the algorithm returns an order on the columns of $\tilde P$ that satisfies all the imposed contiguity constraints, if such an order exists (Line~\ref{line:extract-order:return-order}). Otherwise, it fails and returns nothing (Line~\ref{line:extract-order:fail})

Notably, the constraints enforced by this procedure are not only consistent with the SP order of $P$, but also ensure bounded valley depth (in the sense of \Cref{def:asp}). Formally,

\begin{restatable}{lemma}{LemExtractOrderASP}
    \label{lem:extract-order-asp}
    Let $\tilde P$ be a matrix such that $\norm{\tilde P - P}_{\infty} \leq \varepsilon$ for some SP matrix $P$. \textsc{Extract-Order}($\tilde P, \varepsilon$) returns an order $\prec$ such that $\tilde P$ is $\left( 2K\varepsilon \right)$-ASP w.r.t. $\prec$.
\end{restatable}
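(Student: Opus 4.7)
The lemma has two obligations: (i) that \textsc{Extract-Order} does not fail, meaning the imposed contiguity constraints are mutually consistent, and (ii) that the returned order $\prec$ witnesses $\tilde P$ as $(2K\varepsilon)$-ASP. I would address them in this order.

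For (i), the natural certificate of feasibility is any SP order $\prec_P$ of $P$ itself; I would show that $\prec_P$ satisfies every contiguity constraint the algorithm adds, which forces the PQ tree to admit at least one valid linear extension. Fix a user $u$ and an index $m$ at which the algorithm adds the constraint $A = \{k_1^u, \ldots, k_m^u\}$. By the check on \Cref{line:extract-order:check-gap}, $\tilde P_{u, k_m^u} - \tilde P_{u, k_{m+1}^u} > 2\varepsilon$, so $\norm{\tilde P - P}_{\infty} \leq \varepsilon$ yields
\[
\min_{k \in A} P_{u, k} \;>\; \max_{k \in [K] \setminus A} P_{u, k}.
\]
If $\prec_P$ placed some $k' \notin A$ strictly between two members of $A$, then $0$-ASP of $P$ w.r.t.\ $\prec_P$ (\Cref{prop:sp-iff-zero-asp}) would force $P_{u,k'} \geq \min_{k \in A} P_{u, k}$, contradicting the display above. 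Hence $A$ is contiguous in $\prec_P$, so the PQ tree on \Cref{line:extract-order:check-feasible-order} is feasible and \Cref{line:extract-order:fail} is not reached.

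For (ii), fix the returned order $\prec$, a user $u$, and a triple $i \prec j \prec l$. I would argue by contradiction: suppose $\tilde P_{u,j} < \min\{\tilde P_{u,i}, \tilde P_{u,l}\} - 2K\varepsilon$, and WLOG take $\tilde P_{u,i} \leq \tilde P_{u,l}$, so $\tilde P_{u,i} - \tilde P_{u,j} > 2K\varepsilon$. Let $r_i, r_j, r_l$ denote the ranks of $i, j, l$ in the user-$u$ sorted list; note $r_i \leq r_j - 1$ and $r_l \leq r_i$. The consecutive gaps from rank $r_i$ to rank $r_j$ telescope to $\tilde P_{u,k_{r_i}^u} - \tilde P_{u,k_{r_j}^u} = \tilde P_{u,i} - \tilde P_{u,j} > 2K\varepsilon$; since there are at most $K-1$ such gaps, by pigeonhole some rank $m$ with $r_i \leq m < r_j$ satisfies $\tilde P_{u,k_m^u} - \tilde P_{u,k_{m+1}^u} > 2\varepsilon$. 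The algorithm therefore enforced the contiguity constraint on $\{k_1^u, \ldots, k_m^u\}$, and this set contains both $i$ (rank $r_i \leq m$) and $l$ (rank $r_l \leq m$) but excludes $j$ (rank $> m$), contradicting $i \prec j \prec l$ under $\prec$.

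\textbf{Main obstacle.} The delicate point is calibrating the two thresholds: the $2\varepsilon$ cutoff for triggering a constraint must be large enough to survive the $\varepsilon$-perturbation between $\tilde P$ and $P$ (needed in (i) so every imposed constraint is compatible with $\prec_P$), yet small enough that the telescoping pigeonhole in (ii) forces at least one triggered constraint whenever a $2K\varepsilon$ valley occurs. Step (ii) is the more subtle direction; step (i) essentially formalizes the intuition laid out in the paragraphs preceding the algorithm description.
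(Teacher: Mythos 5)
Your proposal is correct and follows essentially the same route as the paper's proof: part (i) certifies feasibility by showing any SP order of $P$ satisfies every imposed contiguity constraint via the $0$-ASP characterization, and part (ii) is the paper's telescoping-gaps argument (the paper states it directly, you phrase it as a pigeonhole contradiction, but the content is identical). No gaps.
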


The runtime complexity of \textsc{Extract-Order} is dominated by sorting user preferences and PQ tree operations. Sorting preferences for all users requires $O(UK \log K)$ time. Subsequently, for each user, we may add up to $K-1$ contiguity constraints to the PQ tree, where each addition takes $O(K)$ time, yielding $O(UK^2)$ time for all constraints. The final feasibility check and order extraction from the PQ tree requires $O(K)$ time. Thus, the overall runtime is $O(UK^2)$.

\subsection{Explore-then-Commit Algorithm}

\begin{algorithm}[t]
    \caption{Explore-then-Match-and-Commit (\textsc{EMC})}
    \label{alg:etc}
    \begin{algorithmic}[1]
        \Require Exploration rounds $N$ \label{line:etc:require-N}
        \State Pull each arm $k \in [K]$ for $N$ rounds, let $\bar \Theta$ be the empirical mean matrix \label{line:etc:explore-arms}
        \State $\prec \gets$ \textsc{Extract-Order}$(\bar \Theta, \sqrt{\nicefrac{2 \ln T}{N}})$ \label{line:etc:extract-order}
        \State Construct an SP matrix $\tilde \Theta$ from $(\bar \Theta,\prec)$ via \Cref{lem:asp-to-near-sp} with $\delta = 2K\sqrt{\nicefrac{2 \ln T}{N}}$ \label{line:etc:construct-sp-matrix}
        \State $\tilde \pi \gets$ \textsc{SP-Matching}$(\tilde \Theta)$ \label{line:etc:compute-optimal-matching}
        \State Play $\tilde \pi$ for the remaining $T - NK$ rounds \label{line:etc:play-optimal-matching}
    \end{algorithmic}
\end{algorithm}

We now present the Explore-then-Match-and-Commit algorithm (\textsc{EMC}), described in \Cref{alg:etc}, which combines the tools developed in the previous subsections to achieve sublinear regret. It operates in five phases. First, given an input parameter $N$, the algorithm explores by pulling each arm $k \in [K]$ for $N$ rounds and collects empirical means $\bar{\Theta}$. Second, it applies \textsc{Extract-Order} and extracts an order $\prec$ from $\bar{\Theta}$. The parameter $\varepsilon$ is chosen so that $\norm{\Theta - \bar\Theta}_\infty \leq \varepsilon$ holds with high probability. In the third phase, it uses $\bar{\Theta}$ and the order $\prec$ to construct the matrix $\tilde \Theta$, which is SP w.r.t. $\prec$ and is element-wise close to $\bar{\Theta}$. Fourth, it executes \textsc{SP-Matching} on $\tilde \Theta$ to find an optimal matching $\tilde \pi$. And in the fifth and last phase, it exploits--it plays $\tilde \pi$ for the remaining $T - NK$ rounds. Importantly, the computational effort in Lines~\ref{line:etc:extract-order}--\ref{line:etc:compute-optimal-matching} is only $O(K^2U + K^2B)$, ensuring that the algorithm is computationally efficient. This is in sharp contrast to the exponential runtime we obtain for general instances in \Cref{subsec:general-preferences-overview}.

The regret analysis for the commitment phase hinges on the approximation quality between the matrix $\tilde \Theta$ used for the matching and the true expected reward matrix $\Theta$. Since $\tilde \Theta$ is constructed to be close to the empirical estimates $\bar{\Theta}$, and $\bar{\Theta}$ concentrates around $\Theta$ with sufficient exploration, the value gap $V(\tilde \pi; \tilde \Theta) - V(\tilde \pi;  \Theta)$ is bounded by the estimation errors and the approximation errors from \Cref{lem:asp-to-near-sp} and \textsc{Extract-Order}. By optimizing the choice of exploration rounds $N$, we balance the exploration cost against the quality of the resulting approximation.

\begin{restatable}{theorem}{ThmSPETCRegret}\label{thm:sp-etc-regret}
    \textsc{EMC}$\left(N=\left\lceil T^{2/3}(\ln T)^{1/3}\right\rceil\right)$ yields an expected regret of at most $\tilde O(UK T^{2/3})$.
\end{restatable}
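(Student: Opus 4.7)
The plan is to decompose the total regret into an exploration contribution and a commitment contribution, balance them via the choice of $N$, and handle a small failure event separately. Let $\varepsilon = \sqrt{2\ln T/N}$ and define the good event $E = \{\|\bar\Theta - \Theta\|_\infty \leq \varepsilon\}$. By Hoeffding's inequality applied to each $(u,k)$ entry (each estimated from $N$ independent samples in $[0,1]$) together with a union bound over the $UK$ entries, I get $\Pr[E^c] \leq 2UK/T^4$. Since the per-round regret is at most $U$, the total contribution of $E^c$ to the expected regret is $O(U^2 K/T^3)$, which is negligible.

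Next, I would carry out the analysis under $E$. Since $\Theta$ is SP and $\|\bar\Theta - \Theta\|_\infty \leq \varepsilon$, the discussion following \Cref{prop:sp-iff-zero-asp} gives that $\bar\Theta$ is $2\varepsilon$-ASP w.r.t.\ the SP order of $\Theta$. Therefore \textsc{Extract-Order}$(\bar\Theta,\varepsilon)$ cannot fail, and by \Cref{lem:extract-order-asp} it returns an order $\prec$ such that $\bar\Theta$ is $2K\varepsilon$-ASP w.r.t.\ $\prec$. Invoking \Cref{lem:asp-to-near-sp} with $\delta = 2K\varepsilon$, the constructed $\tilde\Theta$ is SP w.r.t.\ $\prec$ and satisfies $\|\tilde\Theta - \bar\Theta\|_\infty \leq 2K\varepsilon$. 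Combining with the concentration bound yields $\|\tilde\Theta - \Theta\|_\infty \leq (2K+1)\varepsilon$.

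I would then translate this entrywise closeness into a per-round value gap. Let $\pi^\star \in \argmax_{\pi \in \Pi} V(\pi;\Theta)$; feasibility depends only on the budget constraint, which is identical for $\Theta$ and $\tilde\Theta$. By \Cref{thm:sp-matching}, $\tilde\pi$ maximizes $V(\cdot;\tilde\Theta)$ over $\Pi$, so
\[
V(\pi^\star;\Theta) - V(\tilde\pi;\Theta)
= \bigl[V(\pi^\star;\Theta) - V(\pi^\star;\tilde\Theta)\bigr]
+ \bigl[V(\pi^\star;\tilde\Theta) - V(\tilde\pi;\tilde\Theta)\bigr]
+ \bigl[V(\tilde\pi;\tilde\Theta) - V(\tilde\pi;\Theta)\bigr].
\]
The middle bracket is non-positive, and each of the outer brackets is bounded by $U\,\|\tilde\Theta - \Theta\|_\infty$, since both $V(\pi^\star;\cdot)$ and $V(\tilde\pi;\cdot)$ are sums of $U$ matrix entries. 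Hence the per-round commitment regret is at most $2U(2K+1)\varepsilon = O\bigl(UK\sqrt{\ln T/N}\bigr)$.

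Finally, I would sum the contributions. The exploration phase uses $NK$ rounds of worst-case per-round regret $U$, contributing $O(UNK)$; the commitment phase contributes at most $T \cdot O(UK\sqrt{\ln T/N})$; the bad event contributes $O(U^2K/T^3)$. Setting $N = \lceil T^{2/3}(\ln T)^{1/3}\rceil$ balances $UNK$ and $TUK\sqrt{\ln T/N}$, both becoming $O(UKT^{2/3}(\ln T)^{1/3})$, which yields the claimed bound. The only subtle point, which I would highlight carefully, is verifying that \textsc{Extract-Order} does not fail under $E$ and that the feasible sets for $\Theta$ and $\tilde\Theta$ coincide so that comparing $\pi^\star$ and $\tilde\pi$ as competitors for $\tilde\Theta$ is legitimate; the rest is routine concentration and bookkeeping.
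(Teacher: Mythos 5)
Your proposal is correct and follows essentially the same route as the paper's proof: the same Hoeffding/union-bound clean event, the same chaining of \Cref{lem:extract-order-asp} and \Cref{lem:asp-to-near-sp} to get $\|\tilde\Theta - \Theta\|_\infty \leq (2K+1)\sqrt{2\ln T/N}$, the same optimality-of-$\tilde\pi$ decomposition for the per-round commitment regret, and the same exploration/commitment balancing. Your added remarks on why \textsc{Extract-Order} cannot fail under the clean event and why the feasible set $\Pi$ is the same for $\Theta$ and $\tilde\Theta$ are points the paper leaves implicit, but they do not change the argument.
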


\subsection{Hardness of Optimistic Matching with Unknown Order}\label{subsec:hardness-optimistic-unknown-order}

In the known SP structure regime (\Cref{sec:known-structure}), the \textsc{MvM} algorithm achieves $\tilde{O}(U\sqrt{TK})$ regret by solving the optimistic matching problem in each round. A natural question is whether a similar approach can yield improved regret guarantees when the SP structure is unknown. Recall that the optimistic approach requires solving \Cref{eq:optimistic-matching-over-all-matrices-in-confidence-set}, where $\mathcal{C}^t$ is now the confidence set of statistically plausible SP matrices. When the order and peaks are known, \Cref{lem:maximal-matrix-in-confidence-set} shows that $\mathcal{C}^t$ admits a unique element-wise maximal matrix, reducing the optimization to a single call to \textsc{SP-Matching}. However, when the order is unknown, $\mathcal{C}^t$ must include all SP matrices consistent with \emph{any} valid order, and no such maximal element exists in general.

We note that if the optimistic matching could be solved efficiently in this regime, the same analysis as in \Cref{thm:match-via-max-regret} would yield the same $\tilde{O}(U\sqrt{TK})$ regret guarantee. One might hope that a more sophisticated algorithm could achieve this. We show that this is unlikely by proving that even a simpler subproblem--finding the best matrix in $\mathcal{C}^t$ for a \emph{fixed} matching--is NP-hard to approximate.

\begin{restatable}{theorem}{ThmUnknownOrderNPHardness}\label{thm:unknown-order-optimistic-matrix-given-matching-np-hardness}
    Consider the following optimization problem, termed \textsc{Max-SP-WCS}: given sets of users $U$ and arms $K$, a matching $\pi: U \to K$, and confidence intervals $[\textnormal{LCB}_{u,k}, \textnormal{UCB}_{u,k}]$ for each $(u,k) \in U \times K$, find $\max_{P \in \mathcal{C}} V(\pi; P)$, where $\mathcal{C}$ contains all SP matrices respecting the confidence intervals. Then, approximating \textsc{Max-SP-WCS} within a factor of $\frac{3}{4} + \delta$ is NP-hard for any $\delta > 0$.
\end{restatable}

\section{Extensions}\label{sec:extensions}

In this section, we explore two extensions of our main results. First, we extend the \textsc{MvM} algorithm to handle instances with unknown structure in \Cref{subsec:extension-separated}. Second, we generalize the \textsc{EMC} algorithm to accommodate non-single-peaked instances in \Cref{subsec:extension-beyond-sp-instances}. We defer the precise specifications and proofs of the algorithms to \Cref{sec:deferred-proofs-extensions}, and provide overviews and proof sketches below.

\subsection{Separated Instances}\label{subsec:extension-separated}

In this subsection, we improve our guarantees for instances that are statistically \emph{simpler} to learn due to distinct preference values. Specifically, we assume \emph{separated instances}, where we quantify separation by $\Delta_{\textnormal{global}} = \min_{u} \min_{i \neq j} \abs{\theta_{u,i} - \theta_{u,j}}$. We introduce the \textsc{Sep-MvM} algorithm, which leverages this separation to improve upon the regret bound of \textsc{EMC}.

\begin{restatable}{proposition}{PropSepMvM}\label{prop:sep-mvm}
    \textsc{Sep-MvM} yields a regret of at most 
    \[
    \tilde{O}\left(U \min \left\{ KT^{2/3}, \sqrt{TK} + \nicefrac{K}{\Delta_{\textnormal{global}}^2} \right\} \right),
    \] 
    for $\Delta_{\textnormal{global}} = \min_{u} \min_{i \neq j} \abs{\theta_{u,i} - \theta_{u,j}}$, with a per-step computational complexity of $O(K^2U + K^2B)$.
\end{restatable}

\begin{proof}[Proof Sketch of \Cref{prop:sep-mvm}]
    \textsc{Sep-MvM} employs a hybrid exploration-exploitation strategy. Initially, it explores all user-arm pairs in a round-robin fashion, maintaining confidence intervals $\mathcal{I}_{u,k}(t) = [\textnormal{LCB}_{u,k}(t), \textnormal{UCB}_{u,k}(t)]$ for every user-arm pair. The algorithm monitors for a \emph{separation event}, defined as the point where, for each user, the confidence intervals of all arms become pairwise disjoint. A direct concentration analysis reveals that such separation must occur after at most $O(\nicefrac{K \ln T}{\Delta_{\textnormal{global}}^2})$ rounds.

    The algorithm sets a timeout at $\tau = K T^{2/3}$ rounds. If separation is detected before $\tau$, the algorithm invokes \textsc{Extract-Order} to recover the SP order and subsequently switches to \textsc{MvM} using the estimated structural parameters. Conversely, if separation is not achieved by $\tau$, the algorithm infers that the instance is not separated and reverts to the standard \textsc{EMC} strategy. The final regret bound reflects the minimum of these two regimes.
\end{proof}

\subsection{Beyond single-peaked instances}\label{subsec:extension-beyond-sp-instances}

In this subsection, we extend \textsc{EMC} to work on non-single-peaked instances. Specifically, we design the \textsc{NSP-EMC} algorithm, which maintains the regret guarantees of \textsc{EMC} while incurring an additional penalty that scales with the distance from the nearest SP instance.

\begin{restatable}{proposition}{PropNSPEMC}
    \label{prop:nsp-emc}
    Fix a $\inst$ instance $\mathcal I$ (possibly not SP), and let $\Theta$ be its expectation matrix. Then, \textsc{NSP-EMC} achieves a regret of at most $O\left(UK T^{2/3}(\ln T)^{1/3} + \gamma UKT\right)$, where $\gamma=\inf_{P \text{ is SP}} \norm{\Theta - P}_{\infty}$, with a computational complexity of $O(UK^2 \log(T) + K^2 B)$.
\end{restatable}
\begin{proof}[Proof Sketch of \Cref{prop:nsp-emc}]
    Fix an instance with a (possibly non-SP) expected reward matrix $\Theta$, and let $\gamma = \inf_{P \text{ is SP}} \norm{\Theta - P}_{\infty}$. During the exploration phase of \textsc{EMC}, we collect empirical estimates $\bar{\Theta}$ satisfying $\norm{\bar{\Theta} - \Theta}_\infty \le \delta_{est}$ w.h.p., and therefore, $\bar{\Theta}$ lies within distance $\gamma + \delta_{est}$ of a valid SP matrix w.h.p. Consequently, invoking \textsc{Extract-Order} on $\bar \Theta$ with a parameter $\varepsilon \approx \gamma + \delta_{est}$ guarantees the recovery of a valid permutation, allowing us to project $\bar{\Theta}$ onto a $O(K(\gamma + \delta_{est}))$-nearby SP approximation $\tilde{\Theta}$. Using the exploration parameter and following the regret analysis from \textsc{EMC}, the algorithm achieves a regret of $\tilde O(UK T^{2/3}) + O(UKT\gamma)$. Moreover, we do not need to know $\gamma$ in advance---\textsc{NSP-EMC} conducts a binary search over the possible parameters passed to \textsc{Extract-Order}, and finds the smallest one for which it does not fail. Then, we can relate this parameter implicitly to $\gamma$ and achieve the same regret bound.
\end{proof}

\section{Discussion}\label{sec:discussion}

We studied a budgeted matching problem under single-peaked preferences. While the general offline problem is NP-hard to approximate within a constant factor, imposing a single-peaked structure enables polynomial-time optimization. Nevertheless, we demonstrated that this structure does not trivialize the statistical aspect of the learning problem, which remains as challenging as the general preferences case. When the SP structure is unknown, we introduce an explore-then-commit approach that first extracts an approximate order, then commits to a near-optimal policy, yielding $\tilde O(UKT^{2/3})$ regret. For the more lenient case of known SP order and user peaks, we developed an optimistic algorithm based on maximal matrices that achieves $\tilde O(U\sqrt{TK})$ regret. We also conducted an experimental validation, which is deferred to \Cref{sec:simulations} due to space constraints.


We identify several interesting directions for future work. First, comparing our algorithmic results with the lower bounds, we observe room for improvement. For the unknown SP structure regume, \textsc{EMC} achieves $\tilde{O}(UKT^{2/3})$ regret, while the $\Omega(U\sqrt{TK})$ lower bound is statistically tight (achievable by inefficient algorithms). It remains open whether the $\sqrt{T}$ rate can be achieved efficiently, or if our $T^{2/3}$ rate is tight for polynomial-time algorithms. \Cref{thm:unknown-order-optimistic-matrix-given-matching-np-hardness} provides evidence toward the latter: exponentially many SP structures may remain consistent with observations, and we show that even evaluating the optimistic value for a fixed matching is NP-hard to approximate. For known order and peaks, \textsc{MvM} achieves $\tilde{O}(U\sqrt{TK})$ regret (\Cref{thm:match-via-max-regret}), exceeding the lower bound by $\sqrt{\ln T} \cdot \max\{\sqrt{K},U\}$. Future work could resolve this gap.

Second, our results suggest that single-peaked preferences could potentially simplify other computationally challenging problems in online learning, suggesting an alternative route to a solution rather than $\alpha$-regret. Finally, extending our results to more complex preference structures \citep{sliwinski2019preferences, peters2020preferences} presents an intriguing challenge.

\section*{Acknowledgments}
This research was supported by the Israel Science Foundation (ISF; Grant No.\ 3079/24).

\bibliography{main}
\bibliographystyle{plainnat}

\newpage
\appendix

\crefalias{section}{appendix}

\section{General Preferences Analysis}\label{sec:general-preferences}

In this appendix, we provide a comprehensive analysis of the computational complexity of achieving sublinear regret under general preferences. We first establish the fundamental hardness of the offline problem, then derive its implications for online learning, and finally discuss the $\alpha$-regret framework as a tractable alternative.

\subsection{Hardness of the Offline Problem}

We begin by proving that the offline budgeted matching problem is NP-hard to approximate within any factor better than $(1-1/e)$.

\ThmOfflineNPHard*

\begin{proof}[\proofof{thm:offline-np-hard}]
    We proceed via reduction from the \textsc{Max $\ell$-Cover} problem. In this problem, one is given a universe $\mathcal{U}$ of elements, a collection of subsets $\mathcal{S} = \{S_1, \dots, S_m\}$, and an integer $\ell$. The objective is to select at most $\ell$ subsets whose union covers the maximum number of elements in $\mathcal{U}$. This problem is known to be NP-hard to approximate better than $(1 - 1/e)$ \citep{10.1145/285055.285059}. 
    
    Given an instance of the Max $\ell$-Cover problem, we construct a $\inst$ instance with $U = |\mathcal{U}|$ users (representing elements) and $K = m$ arms (representing subsets). We set unit costs $c_k = 1$, budget $B = \ell$, and define preferences such that $\Theta_{u,k} = 1$ if element $u \in S_k$ and $0$ otherwise.

    In this constructed instance, each matching $\pi$ induces a set of selected arms $\mathcal{K} = \{k: \exists u\in U \text{ s.t. } \pi(u) = k\}$. A user $u$ contributes $1$ to the total reward only if assigned to a selected arm $k$ where $u \in S_k$. Accordingly, given a fixed $\mathcal{K}$, the optimal matching allocates each user to a covering item if one exists; otherwise, the user yields zero reward. This implies that maximizing total reward is equivalent to maximizing coverage using at most $\ell$ items. It follows that any algorithm approximating the offline matching problem better than $(1 - 1/e)$ would violate the inapproximability result for Max $\ell$-Cover.
\end{proof}

\subsection{Computational Barrier for Efficient Online Learning}

The NP-hardness of approximating the offline problem beyond $(1-1/e)$ directly implies that efficient online learning with sublinear regret is impossible under standard complexity assumptions.

\begin{corollary}\label{cor:general-preferences-no-polynomial-algorithm}
    For any $U, K, B$ and $T = \mathrm{poly}(U, K, B)$, any poly-time online algorithm achieves regret $\Omega(T)$, unless $\mathrm{P} = \mathrm{NP}$.
\end{corollary}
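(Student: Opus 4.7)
The plan is to derive the corollary from \Cref{thm:offline-np-hard} via a standard reduction: turn any hypothetical polynomial-time online algorithm with sublinear regret into a polynomial-time approximation algorithm for the offline problem that violates the known inapproximability threshold.

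First, I would suppose for contradiction that there exists a polynomial-time online algorithm $\mathcal{A}$ whose expected regret on every $\inst$ instance with $T = \mathrm{poly}(U,K,B)$ rounds satisfies $R_T(\mathcal{A}) = o(T)$. Given an arbitrary offline instance $(\Theta, c, B)$ arising from a hard max-coverage reduction of \Cref{thm:offline-np-hard}, I would construct a stochastic $\inst$ instance whose reward distribution $D_{u,k}$ is Bernoulli with mean $\theta_{u,k}$, then simulate $\mathcal{A}$ for $T$ rounds using independently drawn rewards. Since $\mathcal{A}$ runs in time polynomial in $U,K,B$ per round and $T$ is polynomial in $U,K,B$, this simulation is a polynomial-time procedure in the size of the original offline instance.

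Next, I would extract an approximate solution from the simulation. Let $\pi_1, \dots, \pi_T$ be the matchings chosen by $\mathcal A$. By the regret guarantee,
\begin{equation*}
    T \cdot V(\pi^\star;\Theta) - \mathbb E\!\left[\sum_{t=1}^T V(\pi_t;\Theta)\right] \leq R_T(\mathcal{A}) = o(T),
\end{equation*}
so averaging implies that at least one $\pi_t$ satisfies $\mathbb{E}[V(\pi_t;\Theta)] \geq V(\pi^\star;\Theta) - o(1)$. Because $\Theta$ is given explicitly in the offline input, the deterministic quantities $V(\pi_t;\Theta)$ for all $t$ can be evaluated in $O(UT) = \mathrm{poly}(U,K,B)$ time, and I would output $\hat\pi := \arg\max_t V(\pi_t;\Theta)$. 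A standard Markov/averaging argument (possibly together with a polynomial number of independent repetitions and a Hoeffding concentration bound, since each $V(\pi_t;\Theta) \in [0,U]$) converts the expectation statement above into a high-probability guarantee $V(\hat\pi;\Theta) \geq V(\pi^\star;\Theta) - o(1)$.

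Finally, I would derive a contradiction with \Cref{thm:offline-np-hard}. In the reduction from max coverage used to prove that theorem, $V(\pi^\star;\Theta) = \Omega(1)$ on yes-instances, and distinguishing $V(\pi^\star;\Theta) \geq \alpha$ from $V(\pi^\star;\Theta) < (1-1/e)\alpha$ is NP-hard. By choosing $T$ to be a sufficiently large polynomial in $U,K,B$, the additive slack $R_T/T = o(1)$ becomes smaller than any fixed multiplicative gap $\alpha/e$, so $\hat\pi$ attains a $(1-1/e + \Omega(1))$-approximation of $V(\pi^\star;\Theta)$, violating \Cref{thm:offline-np-hard} unless $\mathrm{P} = \mathrm{NP}$. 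The main obstacle I anticipate is the bookkeeping around the additive-versus-multiplicative conversion: one must verify that on the hard instances produced by \Cref{thm:offline-np-hard} the optimal value $V(\pi^\star;\Theta)$ is bounded below by a quantity that dominates the $o(1)$ slack once $T$ is chosen appropriately, which amounts to rescaling the instance (or padding it with duplicated users) so that the additive error can be absorbed into the multiplicative inapproximability gap.
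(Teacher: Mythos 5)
Your proposal is correct and follows essentially the same route as the paper: simulate the hypothetical low-regret online algorithm on an instance derived from the Max $\ell$-Cover reduction, output the best matching seen, and absorb the additive $o(1)$ regret-per-round into the multiplicative $(1-1/e)$ gap by padding the instance so the horizon is large relative to $\mathrm{OPT}\geq 1$. The paper's only notable simplifications are that it makes the rewards deterministic (equal to $\theta_{u,k}$), avoiding your Hoeffding/repetition step, and it pads by duplicating an \emph{arm} rather than users, which inflates the horizon without changing the optimal value at all.
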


\begin{proof}[\proofof{cor:general-preferences-no-polynomial-algorithm}]
    Assume towards contradiction that there exists an algorithm $\mathcal{A}$ running in time $\text{poly}(U,K,B)$ per round that achieves regret $R_T \leq C T^\rho$ for some constants $\rho \in [0,1)$ and $C>0$, for horizons $T$ that are polynomial in $U,K,B$. For ease of presentation, we assume $T = U^\alpha K^\beta B^\gamma$ for some $\alpha,\beta,\gamma \geq 1$.

    \paragraph{Restriction to hard instances.}
    By \Cref{thm:offline-np-hard}, the inapproximability result holds already for instances produced by the reduction from Max-$\ell$-Cover \cite{10.1145/285055.285059}. These instances have $\Theta \in \{0,1\}^{U \times K}$ with unit costs, budget $B=\ell$, and at least one entry equal to $1$. Consequently, for such instances we always have $\max_{\pi \in \Pi(\Theta)} V(\pi;\Theta) \geq 1$. Since the hardness persists on this restricted subclass, it suffices to prove the corollary under the additional assumption $\max_{u,k} \theta_{u,k} = 1$.

    \paragraph{Construction of the online instance.}
    Given such an offline instance with expected reward matrix $\Theta$, we construct a deterministic online instance where the realized reward of each $(u,k)$ pair is fixed to $\theta_{u,k}$. Additionally, we pick an arbitrary arm and duplicate it so that there are
    \[
        D = \max\Bigg\{1, \left\lceil (10C)^{\tfrac{1}{\beta(1-\rho)}} U^{\tfrac{\alpha(\rho-1)}{\beta(1-\rho)}} B^{\tfrac{\gamma(\rho-1)}{\beta(1-\rho)}} \right\rceil \Bigg\}
    \]
    copies of it. Let $\Theta'$ denote the resulting matrix. Duplicating an arm cannot increase the optimal offline value, since multiple users can already be assigned to the same arm at no additional per-round cost. Therefore,
    \[
        \max_{\pi \in \Pi(\Theta')} V(\pi;\Theta') = \max_{\pi \in \Pi(\Theta)} V(\pi;\Theta) = \mathrm{OPT} \geq 1.
    \]

    \paragraph{Simulating $\mathcal{A}$.}
    Run $\mathcal{A}$ for $T' = U^\alpha D^\beta B^\gamma$ rounds on the expanded instance, obtaining matchings $\pi^1,\ldots,\pi^{T'}$, and return 
    \[
        \pi^{\text{best}} = \arg\max_{t \in [T']} V(\pi^t;\Theta').
    \]
    The regret guarantee yields
    \[
        T'\cdot \mathrm{OPT} - \sum_{t=1}^{T'} V(\pi^t;\Theta') \leq C T'^\rho.
    \]
    Since $V(\pi^{\text{best}};\Theta') \geq \tfrac{1}{T'}\sum_{t=1}^{T'} V(\pi^t;\Theta')$, it follows that
    \[
        \mathrm{OPT} - V(\pi^{\text{best}};\Theta') \leq C T'^{\rho-1}.
    \]

    \paragraph{Approximation guarantee.}
    By construction of $D$, we ensured that $C T'^{\rho-1} \leq 0.1$. Hence
    \[
        \mathrm{OPT} - V(\pi^{\text{best}};\Theta') \leq 0.1 \leq 0.1 \mathrm{OPT}.
    \]

    Since any matching on the duplicated instance is equivalent to a matching on the non-duplicated one, without loss of generality, we assume that $\pi^{\text{best}} \in \Pi(\Theta)$. Thus:
    \[
        V(\pi^{\text{best}};\Theta) \geq 0.9 \, V(\pi^\star;\Theta).
    \]

    \paragraph{Contradiction.}
    The algorithm above runs in $\text{poly}(U,K,B)$ time and produces a $0.9$-approximation to the offline optimum, contradicting the $(1-1/e)$-hardness result from \Cref{thm:offline-np-hard} unless $\mathrm{P}=\mathrm{NP}$.
\end{proof}

\subsection{Optimal Regret Without Computational Constraints}

If we disregard computational constraints, optimal regret bounds can be attained by applying the standard \textsc{CUCB} algorithm~\citep{pmlr-v28-chen13a}, assuming access to an exact optimization oracle.

\CorInefficientOptimal*

\begin{proof}[\proofof{cor:inefficient-online-regret}]
    The \textsc{CUCB} algorithm~\citep{pmlr-v28-chen13a} maintains averages $\bar{\theta}_{u,k}(t)$ and counts $n_{u,k}(t)$ for each user-arm pair. In each round $t$, it computes UCB estimates $P_{u,k}(t) = \bar{\theta}_{u,k}(t) + \sqrt{\nicefrac{2 \ln T}{n_{u,k}(t)}}$ (or $\infty$ if $n_{u,k}(t) = 0$) and selects $\pi_t \in \argmax_{\pi \in \Pi} V(\pi; P(t))$ using an optimization oracle. We note that we slightly changed the exact specification of the constants of \textsc{CUCB}, but it is of the same spirit. 

    By Hoeffding's inequality and a union bound over all $(u,k)$ pairs and time steps, the clean event 
    \[
        \mathcal{E} = \{\theta_{u,k} \in [\textnormal{LCB}_{u,k}(t), \textnormal{UCB}_{u,k}(t)] \text{ for all } u,k,t\}
    \]
    holds with probability at least $1 - 2UK/T^3$. We condition on $\mathcal{E}$; its complement contributes negligible regret.

    Under $\mathcal{E}$, since $P(t)$ is element-wise maximal over all matrices respecting the confidence bounds, optimistic selection ensures that for any round $t$:
    \begin{align}
        \max_{\pi \in \Pi} V(\pi; \Theta) - V(\pi_t; \Theta) &\leq V(\pi_t; P(t)) - V(\pi_t; \Theta) \nonumber \\
        &\leq \sum_{u \in [U]} 2\sqrt{\frac{2 \ln T}{n_{u,\pi_t(u)}(t)}}. \label{ineq:ucb-single-round-regret}
    \end{align}
    The first inequality uses optimism: $V(\pi_t; P(t)) \geq V(\pi^\star; P(t)) \geq V(\pi^\star; \Theta)$, where $\pi^\star$ is the optimal matching. The second inequality bounds the gap by the sum of confidence widths.

    Summing Inequality~\ref{ineq:ucb-single-round-regret} over all rounds and regrouping by arm pulls:
    \begin{align*}
        R_T &\leq O\left(\sqrt{\ln T} \sum_{u \in [U]} \sum_{k \in [K]} \sum_{j=1}^{n_{u,k}(T)} \frac{1}{\sqrt{j}}\right) \\
        &= O\left(\sqrt{\ln T} \sum_{u \in [U]} \sum_{k \in [K]} \sqrt{n_{u,k}(T)}\right),
    \end{align*}
    where we used the standard bound $\sum_{j=1}^n 1/\sqrt{j} \leq 2\sqrt{n}$. Applying Jensen's inequality to the inner sum yields $\sum_{k} \sqrt{n_{u,k}(T)} \leq \sqrt{K \sum_k n_{u,k}(T)} = \sqrt{KT}$. Summing over all $U$ users gives $R_T = O(U\sqrt{TK \ln T})$.

    The matching lower bound $\Omega(U\sqrt{TK})$ follows from \Cref{thm:sp-statistical-hardness}, as SP instances form a subclass of general instances.
\end{proof}

While this algorithm is statistically optimal, it assumes an optimization oracle that must solve an NP-hard problem at every time step, rendering it computationally impractical for general instances.

\subsection{Efficient Approximation via Alpha-Regret}

To bridge the gap between computational efficiency and performance guarantees, we consider the $\alpha$-regret framework, which compares against the best efficiently computable solution rather than the true optimum.

\begin{definition}[Alpha Regret]\label{def:alpha-regret}
    For an approximation factor $\alpha \in (0,1]$, the expected cumulative $\alpha$-regret over $T$ rounds is defined as:
    \[
        R_\alpha(T) = \alpha \cdot T \cdot \max_{\pi \in \Pi} V(\pi; \Theta) - \mathbb{E}\left[\sum_{t=1}^T \sum_{u \in [U]} r^t_{u,\pi^t(u)}\right].
    \]
\end{definition}

By reformulating our problem as submodular maximization subject to knapsack constraints, we can leverage existing approximation algorithms to achieve efficient learning with meaningful guarantees.

\begin{lemma}\label{lem:submodularity-monotonicity-of-reformulated-reward}
    The reformulated reward function 
    \[
        f(S) = \sum_{u \in [U]} \max_{k \in S} \Theta_{u,k}
    \]
    is submodular and monotone.
\end{lemma}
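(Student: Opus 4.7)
The plan is to prove both properties by reducing to a per-user analysis. Define, for each user $u\in[U]$, the set function $g_u(S) = \max_{k \in S} \Theta_{u,k}$ (with the convention $g_u(\emptyset) = 0$, matching the range of the $\Theta_{u,k}\in[0,1]$). Then $f = \sum_{u \in [U]} g_u$, and since monotonicity and submodularity are both preserved under non-negative sums, it suffices to show that each $g_u$ is monotone and submodular.

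Monotonicity of $g_u$ is immediate: if $A \subseteq B \subseteq [K]$, then the maximum over the larger set $B$ is taken over a superset of the arms available for $A$, so $g_u(A) \le g_u(B)$.

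For submodularity of $g_u$, I would verify the diminishing-returns form: for every $A \subseteq B \subseteq [K]$ and every $k \notin B$,
\[
g_u(A \cup \{k\}) - g_u(A) \;\ge\; g_u(B \cup \{k\}) - g_u(B).
\]
A clean way is to split into cases on where $\Theta_{u,k}$ lies relative to $g_u(A)$ and $g_u(B)$. If $\Theta_{u,k} \le g_u(A)$, then by monotonicity $\Theta_{u,k} \le g_u(B)$ as well, and both marginal gains vanish. If $g_u(A) < \Theta_{u,k} \le g_u(B)$, the left marginal equals $\Theta_{u,k} - g_u(A) > 0$ while the right marginal is $0$. If $\Theta_{u,k} > g_u(B)$, both marginals are positive and equal $\Theta_{u,k} - g_u(A)$ and $\Theta_{u,k} - g_u(B)$ respectively, and the inequality follows from $g_u(A) \le g_u(B)$. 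In all three cases the required inequality holds.

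I do not foresee any real obstacle here; the main thing to be careful about is handling the empty-set case consistently (so that the marginal contributions are well-defined at $A = \emptyset$) and noting that the sum of submodular and monotone functions inherits both properties, which follows directly from linearity of the marginal $f(S \cup \{k\}) - f(S) = \sum_u [g_u(S\cup\{k\}) - g_u(S)]$.
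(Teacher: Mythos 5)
Your proposal is correct and follows essentially the same route as the paper: both decompose the marginal gain per user and reduce submodularity to the observation that $\max\{0,\Theta_{u,k}-a\}$ is nonincreasing in $a$, which your three-case analysis verifies explicitly. The only cosmetic difference is that you package the per-user argument as a separate function $g_u$ and invoke closure of submodularity under sums, whereas the paper sums the per-user marginals directly.
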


\begin{proof}[\proofof{lem:submodularity-monotonicity-of-reformulated-reward}]
    We prove both properties separately.

    \textit{Submodularity.} For $M \subseteq N \subseteq [K]$ and $k \in [K] \setminus N$, we need
    \[
        f(M \cup \{k\}) - f(M) \geq f(N \cup \{k\}) - f(N).
    \]

    Let $\theta^M_u = \max_{j \in M} \Theta_{u,j}$ and $\theta^N_u = \max_{j \in N} \Theta_{u,j}$. Since $M \subseteq N$, we have $\theta^M_u \leq \theta^N_u$. The marginal gains are:
    \[
        f(M \cup \{k\}) - f(M) = \sum_{u \in [U]} \max\{0, \Theta_{u,k} - \theta^M_u\},
    \]
    \[
        f(N \cup \{k\}) - f(N) = \sum_{u \in [U]} \max\{0, \Theta_{u,k} - \theta^N_u\}.
    \]

    For each user $u$, since $\theta^M_u \leq \theta^N_u$:
    \[
        \max\{0, \Theta_{u,k} - \theta^M_u\} \geq \max\{0, \Theta_{u,k} - \theta^N_u\}.
    \]

    Summing over all users gives the desired inequality.

    \textit{Monotonicity.} For $M \subseteq N$, since $\max_{k \in M} \Theta_{u,k} \leq \max_{k \in N} \Theta_{u,k}$ for each user $u$, we have:
    \[
        f(M) = \sum_{u \in [U]} \max_{k \in M} \Theta_{u,k} \leq \sum_{u \in [U]} \max_{k \in N} \Theta_{u,k} = f(N).
    \]
\end{proof}

\begin{proposition}\label{prop:alpha-regret-bound}
    There exists an algorithm running in $O(K^3U)$ time per-round achieving expected cumulative $\frac{1}{2}$-regret of:
    \[
        \mathbb{E}[R_{1/2}(T)] \leq \tilde O\left(U (K + \beta)^{4/3} T^{2/3}\right),
    \]
    where $\beta = \nicefrac{B}{\min_k c_{k}}$ is the budget-to-minimum-cost ratio.
\end{proposition}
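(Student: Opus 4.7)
The plan is to reformulate the offline problem as monotone submodular maximization subject to a single knapsack constraint, invoke an off-the-shelf efficient approximation oracle, and plug it into an explore-then-commit scheme analogous to Algorithm~\ref{alg:etc}. By Lemma~\ref{lem:submodularity-monotonicity-of-reformulated-reward}, once a feasible subset $S \subseteq [K]$ with $\sum_{k \in S} c_k \leq B$ is fixed, the optimal matching assigns each user to their best arm in $S$, so the offline problem reduces to maximizing $f(S) = \sum_u \max_{k \in S} \Theta_{u,k}$ over the knapsack polytope. The number of arms in any feasible $S$ is at most $\beta = B/\min_k c_k$.

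Next, I would invoke a standard efficient $\tfrac{1}{2}$-approximation for monotone submodular maximization under a knapsack constraint, e.g.\ the modified greedy that returns the better of the cost-density greedy output and the best feasible singleton. Each marginal gain $f(S \cup \{k\}) - f(S)$ costs $O(U)$ once the current best-in-$S$ assignment is maintained incrementally. The greedy performs $O(K)$ rounds over $O(K)$ candidate arms, and a constant-size partial enumeration of seed elements adds another $O(K^2)$ outer loop, yielding the claimed $O(K^3 U)$ per-round runtime.

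For the online algorithm I would use explore-then-commit: pull each arm $N$ rounds (total $NK$) to form the empirical matrix $\bar\Theta$, run the approximation oracle on $\bar f(S) = \sum_u \max_{k \in S}\bar\Theta_{u,k}$ to obtain $\tilde S$, and for each of the remaining $T-NK$ rounds play $\tilde \pi(u) = \argmax_{k \in \tilde S}\bar\Theta_{u,k}$. Hoeffding plus a union bound over the $KU$ pairs gives $\|\Theta-\bar\Theta\|_\infty \leq \epsilon := O(\sqrt{\ln T / N})$ w.p.\ at least $1 - O(1/T^2)$; the low-probability complement contributes negligibly.

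For the $\tfrac12$-regret, decompose into exploration, contributing $O(NKU)$, and commitment. On the clean event, the Lipschitz property $|f(S) - \bar f(S)| \leq U\epsilon$ together with the $\tfrac12$-approximation guarantee on $\bar f$ yields $f(\tilde S) \geq \tfrac12 f(S^\star) - O(U\epsilon)$, and a further $O(U\,|\tilde S|\,\epsilon) = O(U\beta\epsilon)$ term arises because users are matched inside $\tilde S$ according to $\bar\Theta$ rather than $\Theta$. Thus each commitment round incurs at most $O\bigl(U(K+\beta)\sqrt{\ln T/N}\bigr)$ $\tfrac12$-regret. Balancing $NKU$ against $T\cdot U(K+\beta)\sqrt{\ln T/N}$ and optimizing $N$ gives the claimed $\tilde O\bigl(U(K+\beta)^{4/3}T^{2/3}\bigr)$ bound. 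The main obstacle is tracking the $(K+\beta)$ dependence through the oracle-robustness step: one must argue that a uniform $\ell_\infty$ perturbation of $\Theta$ translates into a bounded additive loss in the value of the approximation's output, both from the optimization guarantee and from the intra-subset assignment, so that the two $(K+\beta)$-dependent slack terms combine cleanly before the $N$-optimization.
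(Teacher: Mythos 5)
Your route differs substantially from the paper's. The paper treats the problem through the combinatorial explore-then-commit (C-ETC) framework of \citet{nie2023framework}: it cites the Greedy+Max algorithm of \citet{yaroslavtsev2020bring} as a $\tfrac12$-approximation that is $(\tfrac12,\tfrac12+\tilde K+2\beta)$-\emph{robust}, and the $(K+\beta)^{4/3}$ factor in the bound is inherited from that framework's robustness and enumeration constants. You instead exploit semi-bandit feedback directly: estimate $\bar\Theta$ entrywise, run the approximation oracle on the (still monotone submodular) empirical objective $\bar f$, and transfer via the uniform Lipschitz bound $|f(S)-\bar f(S)|\le U\varepsilon$. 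That transfer argument is sound and actually simpler than invoking oracle robustness; your bookkeeping is loose in places (the intra-subset assignment loss is $O(U\varepsilon)$, not $O(U\beta\varepsilon)$, and the $(K+\beta)$ factor in the per-round regret is not really justified), but since these only inflate an upper bound, the final $\tilde O(U(K+\beta)^{4/3}T^{2/3})$ claim would still follow.

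The genuine gap is in the oracle itself. The algorithm you name---return the better of the cost-density greedy solution and the best feasible singleton---is \emph{not} a $\tfrac12$-approximation for monotone submodular maximization under a knapsack constraint; its classical guarantee is $\tfrac12(1-1/e)\approx 0.316$ (and even the sharpest known analyses of this combined rule fall short of $\tfrac12$). Since the proposition is specifically about $\tfrac12$-regret, substituting a $\tfrac12(1-1/e)$-approximation oracle only bounds $R_{\frac12(1-1/e)}(T)$, which is a strictly weaker statement. Partial enumeration (Sviridenko-style) recovers $1-1/e$ but requires enumerating seed sets of size three, which also breaks your $O(K^3U)$ runtime accounting. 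To get the claimed constant you need an oracle that genuinely achieves $\tfrac12$, such as Greedy+Max of \citet{yaroslavtsev2020bring}, which is exactly the ingredient the paper relies on; with that substitution the rest of your argument goes through.
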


\begin{proof}[\proofof{prop:alpha-regret-bound}]
    \textit{Regret analysis.} We reformulate our problem as submodular maximization over subsets, where we select $S \subseteq [K]$ subject to the budget constraint $\sum_{k \in S} c_k \leq B$ and each user receives their most preferred item from the selected subset. The reformulated reward function $f(S) = \sum_{u \in [U]} \max_{k \in S} \Theta_{u,k}$ is submodular and monotone by \Cref{lem:submodularity-monotonicity-of-reformulated-reward}. Under knapsack constraints, the Greedy+Max algorithm of \citet{yaroslavtsev2020bring} achieves a $\frac{1}{2}$-approximation and is $\left(\frac{1}{2}, \frac{1}{2} + \tilde{K} + 2\beta\right)$-robust in the sense of \citet{nie2023framework}, where $\tilde{K} = \min\{K, B/c_{\min}\}$ and $\beta = B/c_{\min}$. Applying the C-ETC framework of \citet{nie2023framework} with normalized rewards $\tilde{f}(S) = \frac{1}{U} f(S)$ yields the bound $O(U \cdot (\frac{1}{2} + \tilde{K} + 2\beta)^{2/3} (\tilde{K}K)^{1/3} T^{2/3} \log(T)^{1/3})$. To obtain our clean bound, we use $(\frac{1}{2} + \tilde{K} + 2\beta)^{2/3} \leq (K + \beta)^{2/3}$ and $(\tilde{K}K)^{1/3} \leq K^{2/3}$, then apply $(K + \beta)^{2/3} \cdot K^{2/3} \leq (K + \beta)^{4/3}$ to get the stated bound.

    The runtime analysis stems directly from the algorithm of \citet{nie2023framework}.
\end{proof}

\section{Proof of Theorem~\ref{thm:sp-statistical-hardness}}

\ThmSPStatisticalHardness*

\begin{proof}[\proofof{thm:sp-statistical-hardness}]
    We prove the lower bounds by constructing specific hard instances for each regime.

    \paragraph{Case 1: Known Order and Peaks ($R_T = \Omega(U\sqrt{T})$).}
    We construct a family of hard instances parameterized by $\omega = \big(({\mu}_i, {\nu}_i)\big)_{i=1}^{U} \in [1/4, 3/4]^{2U}$. To illustrate the construction, consider a "base case" consisting of three users and four arms with a budget $B=3$ and costs $c=(1,3,1,1)$. The reward structure for this case is shown in \Cref{tab:lower-bound-gadget}.

    \begin{table}[t]
        \centering
        \caption{Reward structure for the base case (Case 1). Costs are shown in parentheses.}
        \label{tab:lower-bound-gadget}
        \begin{tabular}{l|c|c|c|c}
            \toprule
             & Arm 1 ($c_1=1$) & Arm 2 ($c_2=3$) & Arm 3 ($c_3=1$) & Arm 4 ($c_4=1$) \\
            \midrule
            User 1 & $\mu$ & 1 & $\nu$ & 0 \\
            User 2 & 0 & 0 & 0 & 1 \\
            User 3 & 0 & 0 & 0 & 1 \\
            \bottomrule
        \end{tabular}
    \end{table}

    The full instance comprises $U$ copies of User 1 (indexed $1, \dots, U$) and $2U$ copies of Users 2 and 3 (indexed $U+1, \dots, 3U$). The instance is PSP with order $1 \prec 2 \prec 3 \prec 4$. Users $1\dots U$ peak at Arm 2, while the others peak at Arm 4.
    Due to the budget $B=3$, any feasible matching must either select Arm 2 alone (cost 3), or a subset of $\{1,3,4\}$.
    The optimal matching selects $\{1, 3, 4\}$, assigning users $1\dots U$ to $\argmax(\mu_i, \nu_i)$ and the rest to Arm 4. The optimal expected reward is $V^\star(\omega) = \sum_{i=1}^U\max\{\mu_i,\nu_i\} + 2U$.

    Consider an algorithm $\mathcal{A}$ generating matchings $\pi^t$. Let $T_2 = \{ t \in [T]: \text{Im}(\pi^t) = \{2\} \}$ be the rounds where Arm 2 is selected, and $T_1 = [T] \setminus T_2$. In rounds $t \in T_2$, the reward is exactly $U$. In rounds $t \in T_1$, users $U+1 \dots 3U$ contribute at most $2U$ (if they are being matched to Arm 4), while user $i \in [U]$ contributes based on their assignment to 1,3,4.
    Decomposing the cumulative reward, similar to the classical bandit analysis, we can upper-bound the reward by
    \[
        \mathbb{E}_\omega^\mathcal{A}\left[\sum_{t=1}^T V(\pi^t; \Theta^\omega)\right]
        \leq \mathbb{E}_\omega^\mathcal{A}\left[ U \abs{T_2} + 2U \abs{T_1} + \sum_{i=1}^{U} \left(\mu_i N_1^i + \nu_i N_3^i\right) \right],
    \]
    where $N_j^i$ is the number of times user $i$ is matched to arm $j$. The regret is then bounded by:
    \[
          R_T \geq \mathbb{E}_\omega^\mathcal{A}\left[U \abs{T_2} + \abs{T_1} \sum_{i=1}^U \max\{{\mu}_i, {\nu}_i\} - \sum_{i=1}^{U} \left(\mu_i N_1^i + \nu_i N_3^i\right) \right].
    \]
    If $\mathbb{E}[|T_2|] \geq \sqrt{T}$, the first term yields $\Omega(U\sqrt{T})$. If $\mathbb{E}[|T_2|] < \sqrt{T}$, then $\mathbb{E}[|T_1|] = \Omega(T)$, and the remaining terms correspond to the regret of $U$ independent 2-armed bandit problems played for $\Omega(T)$ rounds. By the standard minimax lower bound \citep{doi:10.1137/S0097539701398375}, this scales as $\Omega(U\sqrt{T})$.

    \paragraph{Case 2: Known Order and Peaks ($R_T = \Omega(\sqrt{TK})$).}
    To capture the dependency on $K$, assume $U=2$, budget $B=1$ (forcing a choice of exactly one arm per round), and costs $c_k=1$. We construct a family of hard PSP instances parameterized by $i^\star \in [K]$. For each $i^\star$, the reward structure is defined as follows: User 1 has preferences $1/2 + \Delta$ for arms $k \leq i^\star$ and $1/2$ for arms $k > i^\star$, while User 2 has preferences $1/2$ for arms $k < i^\star$ and $1/2 + \Delta$ for arms $k \geq i^\star$. 
    
    Each instance is PSP: User 1's preferences are non-increasing with a peak at arm 1, and User 2's preferences are non-decreasing with a peak at arm $K$. For any choice of $i^\star$, arm $i^\star$ is the unique optimal arm with total reward $1 + 2\Delta$, while all other arms yield $1 + \Delta$. Identifying the optimal arm among $K$ possibilities reduces to the classical $K$-armed bandit problem. Following the standard change-of-measure argument via Bretagnolle-Huber (e.g., \citet[Chapter~15]{lattimore2020bandit}) with $\Delta \asymp \sqrt{K/T}$, we obtain a lower bound of $\Omega(\sqrt{TK})$. Note that although each round yields two reward observations rather than one, this only affects the KL-divergence by a constant factor and does not change the order of the lower bound. Combining this with Case 1, we have a regret lower bound of $\Omega(\max\{U\sqrt{T}, \sqrt{TK}\})$.

    \paragraph{Case 3: Unknown Peaks ($R_T = \Omega(U\sqrt{TK})$).}
    If the peaks are unknown, we rely on the classic construction of the multi-armed bandit lower bound (e.g., \citet[Chapter~2]{slivkins2019introduction}). Specifically, we construct an instance where each user $u$ has a unique, unknown peak $p(u)$ with reward $1/2 + \Delta$, while all other arms offer reward $1/2$. Observe that such preference profiles are trivially PSP regardless of the underlying order of arms, as they are constant everywhere except at a single point (the peak).
    Since the distributions are independent across users, learning the peak for one user provides no information about the others. This reduces to $U$ independent $K$-armed bandit instances. By the standard minimax lower bound, each user contributes $\Omega(\sqrt{TK})$ to the regret, resulting in a total regret of $\Omega(U\sqrt{TK})$.
\end{proof}

\section{Deferred Proofs from \Cref{sec:offline-algorithm}}\label{sec:deferred-proofs-offline-algorithm}

\LemSPOfflineMatchingGivenSubset*

\begin{proof}[\proofof{lem:matching-given-subset}]
    Recall that under single-peaked preferences,  the preferences of each user $u$ are unimodal with a peak at $p(u)$---the reward function $\theta_{u,k}$ is non-decreasing for $k \preceq p(u)$ and non-increasing for $p(u) \preceq k$.

    For the first case, assume that $k_j \preceq p(u) \preceq k_{j+1}$ for some $1 \leq j < m$. For any $\ell < j$, since $k_\ell \preceq k_j \preceq p(u)$, the non-decreasing property gives $\theta_{u,k_\ell} \leq \theta_{u,k_j}$. Similarly, for any $\ell > j+1$, since $p(u) \preceq k_{j+1} \preceq k_\ell$, the non-increasing property yields $\theta_{u,k_{j+1}} \geq \theta_{u,k_\ell}$. Therefore, any arm outside $\{k_j, k_{j+1}\}$ is dominated by at least one arm in this pair, establishing that $\pi^\star(u) \in \{k_j, k_{j+1}\}$.

    For the second case, start with the case of $p(u) \preceq k_1$, which implies $p(u) \preceq k_1 \preceq \cdots \preceq k_m$. By the non-increasing property after the peak, we have $\theta_{u,k_1} \geq \theta_{u,k_2} \geq \cdots \geq \theta_{u,k_m}$, which implies $\pi^\star(u) = k_1$. The case of $k_m \preceq p(u)$ is similar, and yields $\pi^\star(u) = k_m$.
\end{proof}

\begin{algorithm}[t]
    \caption{\textsc{SP-Matching}}
    \label{alg:sp-matching}
    \begin{algorithmic}[1]
        \Require PSP matrix $P$ with peaks $p(\cdot)$, budget $B$, costs $c(\cdot)$
        \Ensure $\pi^\star = \argmax_{\pi \in \Pi} V(\pi; P)$
        \State Add arm $0$ with cost $c_0 = 0$ and $P_{u,0} = 0$ for all $u \in [U]$
        \State $\displaystyle \forall i,j \in \{0\} \cup [K] : G_{i,j} \gets \sum_{u: i < p(u) \leq j} \max\{P_{u,i}, P_{u,j}\}$ \label{line:sp-matching:compute-G}
        \State $\forall b=1,...,B : F(0, b) \gets 0$ \label{line:sp-matching:initialize-F}
        \For{$k = 1,\ldots,K$}
        \For{$b = c_k, \ldots, B$} \label{line:sp-matching:outer-loop}
        \State $\displaystyle F(k, b) \gets \max_{\substack{i: 0 \leq i < k, \\ b \geq c_i + c_k}} \left[ F(i, b - c_k) + G_{i,k} \right]$ \label{line:sp-matching:update-F}
        \EndFor
        \EndFor
        \State $\displaystyle V^\star \gets \max_{k \in [K]} \left\{ F(k, B) + \sum_{u: p(u) > k} P_{u,k} \right\}$ \label{line:sp-matching:compute-V-star}
        \State Backtrack to find selected arms $S^\star$ \label{line:sp-matching:backtrack}
        \State \Return $\pi^\star(u) = \argmax_{k \in S^\star} P_{u,k}$ \label{line:sp-matching:return-pi}
    \end{algorithmic}
\end{algorithm}

\ThmSinglePeakOffline*

\begin{proof}[\proofof{thm:sp-matching}]
    Define $\text{OPT}(k,b)$ as the maximum reward achievable when arm $k$ is the rightmost selected arm, the total budget is at most $b$, and we consider only users whose peaks lie in $\{0, 1, \ldots, k\}$. Note that the fictive arm 0 contributes zero reward and zero cost. We prove by induction that $F(k,b) = \text{OPT}(k,b)$ for all $k \geq 0$ and $b \geq 0$.

    \textit{Base case:} For the fictive arm, $\text{OPT}(0,b) = 0$ for all $b \leq B$ since no users have peaks at arm 0 and the arm contributes zero reward. The algorithm correctly assigns $F(0,b) = 0$ in Line~\eqref{line:sp-matching:initialize-F}.

    \textit{Inductive step:} Assume $F(i,b') = \text{OPT}(i,b')$ for all $0 \leq i < k$ and all budgets $c_k \leq b' < b$. For any $k \geq 1$ and budget $b \geq c_k$, the optimal solution must select some arm $i$ with $0 \leq i < k$ as the second-rightmost selected arm (where $i = 0$ corresponds to selecting only arm $k$).

    By \Cref{lem:matching-given-subset}, users with peaks in the interval $(i,k]$ are optimally assigned to either arm $i$ or arm $k$, contributing exactly $G_{i,k}$ to the total reward. Users with peaks in $\{0, 1, \ldots, i\}$ contribute optimally according to $\text{OPT}(i, b-c_k)$ by definition. Since the fictive arm 0 has zero cost, the constraint $b \geq c_i + c_k$ is always satisfiable with $i = 0$. Therefore,
    \[
        \text{OPT}(k,b) = \max_{\substack{i: 0 \leq i < k, \\ b \geq c_i + c_k}} [\text{OPT}(i, b-c_k) + G_{i,k}].
    \]

    The inductive hypothesis ensures that $\text{OPT}(i, b-c_k) = F(i, b-c_k)$, so:
    \[
        \text{OPT}(k,b) = \max_{\substack{i: 0 \leq i < k, \\ b \geq c_i + c_k}} [F(i, b-c_k) + G_{i,k}] = F(k,b).
    \]

    After computing $F(k,b)$ for all relevant values, Line~\eqref{line:sp-matching:compute-V-star} computes
    \[
        \max_{k \in [K]} \{F(k,B) + \sum_{u: p(u) > k} P_{u,k}\},
    \]
    which accounts for users whose peaks exceed the rightmost selected arm due to the second case in \Cref{lem:matching-given-subset}. This ensures that the algorithm correctly computes the maximum reward for the given budget $B$. The backtracking reconstructs the optimal matching, from which we can exclude arm $0$ if it was somehow included by picking any other arm for that user.

    \paragraph{Runtime Analysis:} Computing the $G$ matrix requires $O(K^2U)$ time. The dynamic programming fills $O(KB)$ entries, with each entry requiring $O(K)$ operations, yielding $O(K^2B)$ time. The remaining steps take $O(KU)$ time. Therefore, the total complexity is $O(K^2U + K^2B)$.
\end{proof}

\section{Deferred Proofs and Details from \Cref{sec:known-structure}}\label{sec:deferred-proofs-known-structure}

\LemMaximalMatrixInConfidenceSet*




\begin{proof}[\proofof{lem:maximal-matrix-in-confidence-set}]
    First, observe that $P^t$ is SP by construction (as the values are defined via a running minimum moving away from the peak) and satisfies $P^t \le \textnormal{UCB}$ by definition.

    Next, assume towards contradiction that there exists a matrix $P \in \mathcal{C}$ and a pair $(u,k)$ such that $P_{u,k} > P^t_{u,k}$. By the definition of $P^t$, there exists an index $k'$ with either $k \preceq k' \preceq p(u)$ or $p(u) \preceq k' \preceq k$ such that $P^t_{u,k} = \textnormal{UCB}_{u,k'}(t)$. From our assumption, we have $P_{u,k} > \textnormal{UCB}_{u,k'}$. Since $P \in \mathcal{C}$, \Cref{def:confidence-set} ensures that $P_{u,k'} \leq \textnormal{UCB}_{u,k'}(t)$. Combining these inequalities yields $P_{u,k} > \textnormal{UCB}_{u,k'}(t) \geq P_{u,k'}$. However, the SP property of $P$ with peak at $p(u)$ implies $P_{u,k} \leq P_{u,k'}$ (since both arms lie on the same side of the peak), contradicting $P_{u,k} > P_{u,k'}$.

    Finally, if $\mathcal{C}^t$ is non-empty, then for any $P \in \mathcal{C}^t$, we have $P^t \ge P \ge \textnormal{LCB}$, ensuring $P^t \in C^t$ and completing the proof.
\end{proof}

\ThmMvMRegret*

\begin{proof}[\proofof{thm:match-via-max-regret}]
    The analysis follows the same structure as the UCB-based algorithm in \Cref{cor:inefficient-online-regret}, with the maximal matrix $P^t$ playing the role of the UCB matrix. We condition on the clean event $\mathcal{E} = \{\Theta \in \mathcal{C}^t \text{ for all } t\}$, which holds with probability at least $1 - \nicefrac{2UK}{T^3}$ by Hoeffding's inequality and a union bound.

    Under $\mathcal{E}$, since $P^t$ is element-wise maximal in $\mathcal{C}^t$ (\Cref{lem:maximal-matrix-in-confidence-set}), the per-round regret satisfies Inequality~\eqref{ineq:maximal-matrix-single-round-regret}. Summing over rounds and applying the same regrouping and Jensen's inequality arguments as in the proof of \Cref{cor:inefficient-online-regret} yields $R_T = O(U\sqrt{TK \ln T})$.

    For the runtime, each round involves updating statistics ($O(U)$), constructing $P^t$ via \Cref{lem:maximal-matrix-in-confidence-set} ($O(UK)$), and running \textsc{SP-Matching} ($O(K^2(U+B))$ by \Cref{thm:sp-matching}). The complexity is dominated by the matching step.
\end{proof}

\section{Deferred Proofs from \Cref{sec:unknown-structure}}\label{sec:deferred-proofs-unknown-structure}

\PropSPiffZeroASP*




\begin{proof}[\proofof{prop:sp-iff-zero-asp}] 
    First, assume $P$ is SP w.r.t. $\prec$. By definition, for any user $u$, the values $P_{u,k}$ are non-decreasing up to a peak $p(u)$ and non-increasing thereafter. Consequently, for any triplet $i \prec j \prec \ell$, the middle element $j$ must satisfy $P_{u,j} \geq P_{u,i}$ (if $j \preceq p(u)$) or $P_{u,j} \geq P_{u,\ell}$ (if $j \succeq p(u)$). In either case, $P_{u,j} \geq \min\{P_{u,i}, P_{u,\ell}\}$, satisfying the 0-ASP condition. 
    
    Conversely, suppose $P$ is 0-ASP w.r.t. $\prec$. For any user $u$, let $p(u)$ be an index maximizing $P_{u, \cdot}$. For any $i \prec j \prec p(u)$, the 0-ASP condition implies $P_{u,j} \geq \min\{P_{u,i}, P_{u,p(u)}\} = P_{u,i}$, ensuring non-decreasing values to the left of the peak. By symmetry, values are non-increasing to the right of the peak, implying that $P$ is SP.  
\end{proof}

\LemASPToNearSP*


\begin{proof}[\proofof{lem:asp-to-near-sp}]
    The proof is via construction. For every $u \in U$, set $p(u)= \argmax_{k \in [K]} \tilde P_{u,k}$, and define $P$ as follows:
    \[
        P_{u,k} =
        \begin{cases}
            \max_{i: i \preceq k} \tilde P_{u,i}, & k \preceq p(u) \\
            \max_{i: k \preceq i} \tilde P_{u,i}, & p(u) \preceq k
        \end{cases}.
    \]
    The matrix $P$ is SP w.r.t. $\prec$ by construction: for each user $u$, the row $P_{u,(\cdot)}$ is defined as a running maximum from the endpoints toward $p(u)$, ensuring non-decreasing values for $k \preceq p(u)$ and non-increasing values for $k \succeq p(u)$. 
    
    It remains to show that $\norm{P - \tilde P}_{\infty} \leq \delta$. Consider an entry $(u,k)$ with $k \prec p(u)$. By construction, $P_{u,k} = \max_{i: i \preceq k} \tilde P_{u,i}$. If the maximum is achieved at $k$ itself, then $P_{u,k} = \tilde P_{u,k}$. Otherwise, the maximum is achieved at some $i \prec k$, and we have $P_{u,k} = \tilde P_{u,i} \leq \tilde P_{u,k} + \delta$, where the inequality follows from applying \Cref{def:asp} to the triplet $i \prec k \prec p(u)$. A symmetric argument applies when $p(u) \prec k$. Since $P_{u,k} \geq \tilde P_{u,k}$ by construction (as the maximum includes $\tilde P_{u,k}$ itself), we conclude $|P_{u,k} - \tilde P_{u,k}| \leq \delta$ for all $(u,k)$.
\end{proof}

\LemExtractOrderASP*

\begin{proof}[\proofof{lem:extract-order-asp}]
    We first establish the existence of an order that satisfies all the constraints; we do so via the SP order of $P$, which we denote as $\prec_P$. Fix some $u \in U$ for which a contiguity constraint on the set of arms $S=\{k_1^u, \ldots, k_m^u\}$ was imposed. Due to Lines~\ref{line:extract-order:sort-preferences}--\ref{line:extract-order:check-gap}, we must have $\min_{k \in S} \tilde P_{u,k} - \max_{k' \notin S} \tilde P_{u,k'} > 2\varepsilon$. Since $\norm{\tilde P - P}_{\infty} \leq \varepsilon$, this implies $\min_{k \in S} P_{u,k} - \max_{k' \notin S} P_{u,k'} > 0$. Thus, because $P$ is $0$-ASP w.r.t. $\prec_P$, $\prec_P$ must satisfy this constraint as well; otherwise, as was previously discussed, we could have found a triplet of arms that contradicts \Cref{def:asp}.

    Next, let $\prec$ be a returned order (one must exist, as our first step suggests). Fix $u$ and arms $i \prec j \prec \ell$. We must show $\tilde P_{u,j} \geq \min\{\tilde P_{u,i}, \tilde P_{u,\ell}\} - 2K\varepsilon$. Assume w.l.o.g. that $\tilde P_{u,i} \leq \tilde P_{u,\ell}$. Consider the arms sorted by preference $k_1^u, \dots, k_K^u$. If there were any index $m$ such that $\{k_1^u, \dots, k_m^u\}$ includes $i$ and $\ell$ but excludes $j$, and the gap $\tilde P_{u,k_m^u} - \tilde P_{u,k_{m+1}^u} > 2\varepsilon$, the algorithm would constrain this set to be contiguous. This would force $j$ outside the interval between $i$ and $\ell$, contradicting $i \prec j \prec \ell$. Therefore, no gap exceeding $2\varepsilon$ exists in the sorted sequence between the values $\tilde P_{u,i}$ and $\tilde P_{u,j}$. Summing these gaps (at most $K$) yields $\tilde P_{u,i} - \tilde P_{u,j} \leq 2K\varepsilon$.
\end{proof}

\ThmSPETCRegret*

\begin{proof}[\proofof{thm:sp-etc-regret}]
    Let $\varepsilon = \sqrt{2 \ln T / N}$. By Hoeffding's inequality and a union bound, the clean event $\mathcal{E} = \{ \norm{\bar \Theta - \Theta}_\infty \le \varepsilon \}$ holds with probability at least $1 - \nicefrac{2KU}{T^4}$. Its complement, $\mathcal{E}^C$, occurs with low probability and contributes at most $TU$ to the overall regret; thus, we condition the rest of the proof on $\mathcal{E}$. Under $\mathcal{E}$, the approximation guarantees from \Cref{lem:asp-to-near-sp,lem:extract-order-asp} combined imply that the constructed matrix $\tilde \Theta$ satisfies $\norm{\tilde \Theta - \bar \Theta}_\infty \le 2K\varepsilon$. By the triangle inequality, the total estimation error is bounded by
    \begin{inequality}\label{ineq:sp-etc-proof-theta-difference}
        \norm{\tilde \Theta - \Theta}_{\infty} \leq (2K+1)\sqrt{\nicefrac{2 \ln T}{N}}.
    \end{inequality}

    Now, let $\tilde \pi$ be the matching computed in Line~\eqref{line:etc:compute-optimal-matching}, and let $\pi^\star$ be the optimal matching w.r.t. $\Theta$. Using \Cref{ineq:sp-etc-proof-theta-difference}, we can bound the regret of a single commitment round ($t > KN$) as follows:
    \[
         V(\pi^\star; \Theta) - V(\tilde \pi; \Theta) \leq V(\pi^\star; \tilde \Theta) - V(\tilde \pi; \tilde \Theta) + 2 U (2K+1)\sqrt{\nicefrac{2 \ln T}{N}} \leq 2 U (2K+1)\sqrt{\nicefrac{2 \ln T}{N}},
    \]
    where the last inequality follows the optimality of $\tilde \pi$ with respect to $\tilde \Theta$. Thus, by dividing into exploration and commitment rounds, we can bound the total regret by
    \[
        R_T \leq O\left(KNU + T \cdot 2U(2K+1)\sqrt{\nicefrac{2 \ln T}{N}}\right).
    \]

    Substituting $N = \left\lceil T^{2/3}(\ln T)^{1/3}\right\rceil$ yields $R_T \leq O(UK T^{2/3}(\ln T)^{1/3}) = \tilde O(UK T^{2/3})$, which completes the proof.
\end{proof}

\ThmUnknownOrderNPHardness*

\begin{proof}[\proofof{thm:unknown-order-optimistic-matrix-given-matching-np-hardness}]
    We reduce from \textsc{MAX Betweenness} \citep{doi:10.1137/0208008, austrin2015np}, which involves, given a set of elements $S$ and a collection of ordered triplets $G \subseteq S^3$, finding a linear order of $S$ that maximizes the number of triplets $(a,b,c) \in G$ for which element $b$ lies between $a$ and $c$ in the order. \citet{austrin2015np} show that for any $\eta > 0$, it is NP-hard to distinguish between instances where at least $(1-\eta)$ fraction of triplets can be satisfied versus at most $(1/2 + \eta)$ fraction.

    \paragraph{Reduction.} Given a \textsc{Betweenness} instance $(S, G)$, construct a \textsc{Max-SP-WCS} instance as follows. Create an arm for each element $s \in S$, so $K = S$. For each triplet $g = (a_g, b_g, c_g) \in G$, create two users $u_g$ and $v_g$, so $U = \{u_g, v_g : g \in G\}$. Define the matching by $\pi(u_g) = a_g$ and $\pi(v_g) = c_g$. Fix $\varepsilon \in (0, 1/2)$ and set confidence intervals as:
    \[
        [\textnormal{LCB}_{u_g,k}, \textnormal{UCB}_{u_g,k}] =
        \begin{cases}
            [1-\varepsilon, 1], & k = b_g \\
            [0, \varepsilon], & k = c_g \\
            [0, 1], & \text{otherwise}
        \end{cases},
    \]
    \[
        [\textnormal{LCB}_{v_g,k}, \textnormal{UCB}_{v_g,k}] =
        \begin{cases}
            [1-\varepsilon, 1], & k = b_g \\
            [0, \varepsilon], & k = a_g \\
            [0, 1], & \text{otherwise}
        \end{cases}.
    \]
    This construction is polynomial in the size of the \textsc{Betweenness} instance.

    \paragraph{Key observation.} Fix any arm order $\prec$ and triplet $g = (a_g, b_g, c_g)$. If $b_g$ lies between $a_g$ and $c_g$ in~$\prec$, we can complete both rows $u_g$ and $v_g$ unimodally with peaks at $b_g$ that respect the confidence intervals and achieve $P_{u_g, a_g} = P_{v_g, c_g} = 1$: set $P_{u_g, k} = 1$ for all arms $k$ on the same side of $b_g$ as $a_g$ (including $b_g$), and $P_{u_g, k} = 0$ otherwise; similarly for $v_g$.

    Conversely, if $b_g$ does not lie between $a_g$ and $c_g$, then either $c_g$ lies between $a_g$ and $b_g$, or $a_g$ lies between $b_g$ and $c_g$. In the former case, single-peakedness requires $P_{u_g, a_g} \leq P_{u_g, c_g}$ or $P_{u_g, b_g} \leq P_{u_g, c_g}$; since the confidence intervals enforce $P_{u_g, c_g} \leq \varepsilon$ and $P_{u_g, b_g} \geq 1 - \varepsilon$, we must have $P_{u_g, a_g} \leq \varepsilon$. By symmetry, in the latter case, $P_{v_g, c_g} \leq \varepsilon$. These bounds can be achieved with equality.

    Therefore, the contribution from users $u_g, v_g$ is exactly $2$ if triplet $t$ is satisfied by $\prec$, and $1 + \varepsilon$ otherwise. Hence, if $s$ triplets are satisfied:
    \[
        V(\pi; P) = 2s + (|G| - s)(1 + \varepsilon) = |G|(1 + \varepsilon) + s(1 - \varepsilon).
    \]

    \paragraph{Hardness of approximation.} Consider a \textsc{MAX Betweenness} instance $I$ that is hard to distinguish. Denote the corresponding \textsc{Max-SP-WCS} instance by $\hat I$, and let $\textsc{OPT}(\hat I)$ denote the optimal value of $\hat I$. If at least $(1-\eta)|G|$ triplets of $I$ can be satisfied, which we refer to as case $H$, we would have:
    \[
        \textsc{OPT}(\hat I_H) \geq |G|(1 + \varepsilon) + |G|(1-\eta)(1-\varepsilon) = |G|(2 - \eta + \varepsilon\eta).
    \]
    On the other hand, if at most $(1/2+\eta)|G|$ triplets can be satisfied, which we refer to as case $L$:
    \[
        \textsc{OPT}(\hat I_L) \leq |G|(1 + \varepsilon) + |G|(1/2+\eta)(1-\varepsilon) = |G|\left(\nicefrac{3}{2} + \eta + \nicefrac{\varepsilon}{2} - \varepsilon\eta\right).
    \]
    Suppose there exists an $\alpha$-approximation algorithm $\textsc{ALG}$ for \textsc{Max-SP-WCS}, i.e., $\textsc{ALG}(\tilde I) \geq \alpha \cdot \textsc{OPT}(\tilde I)$ for all \textsc{Max-SP-WCS} instances $\tilde I$. Then, in case $H$, we would have $\textsc{ALG}(\hat I_H) \geq \alpha \cdot \textsc{OPT}(\hat I_H)$ while in case $L$ $\textsc{ALG}(\hat I_L) \leq \textsc{OPT}(\hat I_L)$ will hold trivially. Notice that if $\textsc{ALG}(\hat I_H) > \textsc{OPT}(\hat I_L)$, we could distinguish the two cases. This holds whenever:
    \[
        \alpha > \frac{\textsc{OPT}(\hat I_L)}{\textsc{OPT}(\hat I_H)} \leq \frac{\frac{3}{2} + \eta + \frac{\varepsilon}{2} - \varepsilon\eta}{2 - \eta + \varepsilon\eta} \xrightarrow{\eta, \varepsilon \to 0} \frac{3/2}{2} = \frac{3}{4}.
    \]
    Thus, for any $\delta > 0$, a $(\frac{3}{4} + \delta)$-approximation algorithm would distinguish between hard instances for sufficiently small $\eta, \varepsilon$, contradicting the NP-hardness of \textsc{MAX Betweenness}.
\end{proof}

\section{Deferred Algorithms and Proofs from Section~\ref{sec:extensions}}\label{sec:deferred-proofs-extensions}

\subsection{Details and Proof for \Cref{subsec:extension-separated}}

In this appendix, we provide the full specification and regret analysis for the \textsc{Sep-MvM} algorithm discussed in Section \ref{subsec:extension-separated}. We first formally present the algorithm in \Cref{alg:sep-mvm} and then prove \Cref{prop:sep-mvm}.

\begin{algorithm}[h]
    \caption{Sep-MvM}
    \label{alg:sep-mvm}
    \begin{algorithmic}[1]
        \State Initialize $n_{u,k} \leftarrow 0$, $\bar{\theta}_{u,k} \leftarrow 0$ for all $u, k$
        \Statex \textit{// Phase 1: Exploration with Monitoring}
        \For{$t = 1, \dots, \tau = K T^{2/3}$}
            \State Select arm $k_t = (t \pmod K) + 1$ for all users (Round-Robin)
            \State Observe rewards, update empirical means $\bar{\theta}_{u,k}$ and counts $n_{u,k}$
            \State Update Confidence Intervals: $\mathcal{I}_{u,k}(t) = [\bar{\theta}_{u,k} - \sqrt{\frac{2 \ln T}{n_{u,k}}}, \bar{\theta}_{u,k} + \sqrt{\frac{2 \ln T}{n_{u,k}}}]$
            \State $\texttt{SEP} \leftarrow \forall u, \forall i \neq j, \mathcal{I}_{u,i}(t) \cap \mathcal{I}_{u,j}(t) = \emptyset$
            \If{\texttt{SEP} is True}
                \State Break Phase 1., proceed to Phase 2
            \EndIf
        \EndFor
        \Statex \textit{// Phase 2: Structure Exploitation}
        \If{\texttt{SEP} is True (Structure Recovered)}
            \State Recover SP Order $\prec$ using \textsc{Extract-Order}$(\bar{\Theta}, 0)$ and the peaks
            \State Run \textsc{MVM} using recovered order $\prec$ and inferred peaks
        \Else
            \Statex \textit{// Timeout Reached - fallback to EMC Strategy}
            \State Solve $\bar{\pi} \leftarrow \textsc{SP-Matching}(\hat{\Theta})$ (using the commitment phase of \textsc{EMC})
            \State Commit to $\bar{\pi}$ for rounds $t, \dots, T$
        \EndIf
    \end{algorithmic}
\end{algorithm}

\PropSepMvM*

\begin{proof}[\proofof{prop:sep-mvm}]
    We analyze the regret conditioned on the standard clean event $\mathcal{E} = \{ \forall u, k, t: |\hat{\theta}_{u,k}(t) - \theta_{u,k}| \le \sqrt{\frac{2 \ln T}{n_{u,k}(t)}} \}$, which holds with probability at least $1 - 2UKT^{-2}$. Under $\mathcal{E}$, the true mean is always contained within the confidence intervals $\mathcal{I}_{u,k}(t)$.

    We consider two cases based on the magnitude of the instance gap $\Delta_{\textnormal{global}}$.

    \paragraph{Case 1: $\Delta_{\textnormal{global}} < \sqrt{\frac{32 \ln T}{T^{2/3}}}$.}

    First, notice that if separation does occur before the timeout $\tau = K T^{2/3}$, then the regret will be at most $O(UKT^{2/3})$, as it comprises of the exploration regret (which will be less that $KT^{2/3}$ rounds) and the regret of the \textsc{MvM} algorithm, which is at most $O(U \sqrt{TK})$ when run with the correct parameters. Next, suppose the separation condition is \textit{not} met by the timeout. The algorithm switches to the \textsc{EMC} commitment strategy, after performing its exact exploration phase with the same exploration parameter as in \Cref{thm:sp-etc-regret}. Thus, the exact same regret bound applies and we get $R(T) \leq \tilde{O}(U K T^{2/3})$. And indeed, when the condition $\Delta_{\textnormal{global}} < \sqrt{\frac{32 \ln T}{T^{2/3}}}$ holds, this is the smallest of the two expressions in the minimum (up to poly-logarithmic factors).

    \paragraph{Case 2: $\Delta_{\textnormal{global}} \ge \sqrt{\frac{32 \ln T}{T^{2/3}}}$.}
    In this case, separation occurs \textit{before} the timeout. To see this, notice that under the clean event, separation must occur if $\sqrt{\frac{2 \ln T}{n_{u,k}}} \leq \Delta_{\textnormal{global}}/4$. Since samples are collected round-robin, $n_{u,k}(t) \approx t/K$. The condition requires:
    \[
        \sqrt{\frac{2 K \ln T}{t}} < \frac{\Delta_{\textnormal{global}}}{4} \implies t > \frac{32 K \ln T}{\Delta_{\textnormal{global}}^2}.
    \]
    Thus, the separation event occurs at $t_{sep} = O(\frac{K \ln T}{\Delta_{\textnormal{global}}^2}) < \tau$.

    After the separation, the structure can be recovered perfectly (again, conditioning on the clean event)--pairwise disjointness implies the sorted order of empirical means is identical to the sorted order of true means. Thus, \textsc{Extract-Order} will recover a true SP order and peaks.

    After $t_{sep}$, we switch to MVM. The total regret is the sum of the exploration regret (up to $t_{sep}$) and the MVM regret (from $t_{sep}$ to $T$):
    \[
        R(T) \le \underbrace{U \cdot t_{sep}}_{\text{Exploration}} + \underbrace{R_{\text{MVM}}(T - t_{sep})}_{\text{MVM Regret}}
    \]
    Substituting $t_{sep} = \tilde{O}(K/\Delta_{\textnormal{global}}^2)$ and the MVM bound $\tilde{O}(U\sqrt{TK})$:
    \[
        R(T) \le \tilde{O}\left( \frac{UK}{\Delta_{\textnormal{global}}^2} + U\sqrt{TK} \right).
    \]

    \paragraph{Conclusion.}
    Taking the minimum of the two cases (since the algorithm automatically executes the better strategy via the timeout mechanism), we obtain the bound:
    \[
        \tilde{O}\left(U \min \left\{ KT^{2/3}, \sqrt{TK} + \frac{K}{\Delta_{\textnormal{global}}^2} \right\} \right).
    \]
\end{proof}

\subsection{Details and Proof for \Cref{subsec:extension-beyond-sp-instances}}

\begin{algorithm}[h]
    \caption{NSP-EMC}
    \label{alg:nsp-emc}
    \begin{algorithmic}[1]
        \Require Exploration rounds $N$
        \State Match each user $u$ to each arm $k \in [K]$ for $N$ rounds; let $\overline{\Theta}$ be the empirical mean matrix
        \Statex \textbf{Binary Search for $\varepsilon$:}
        \State Set $\varepsilon_{low} \leftarrow 0, \varepsilon_{high} \leftarrow 1, \hat{\varepsilon} \leftarrow 1, \prec \leftarrow \text{Identity}$
        \While{$\varepsilon_{high} - \varepsilon_{low} > \frac{1}{T}$}
            \State $\varepsilon_{mid} \leftarrow (\varepsilon_{low} + \varepsilon_{high}) / 2$
            \State $\prec_{temp} \leftarrow \textsc{Extract-Order}(\overline{\Theta}, \varepsilon_{mid})$
            \If{$\prec_{temp} \neq \text{fail}$}
                \State $\hat{\varepsilon} \leftarrow \varepsilon_{mid}$
                \State $\prec \leftarrow \prec_{temp}$
                \State $\varepsilon_{high} \leftarrow \varepsilon_{mid}$
            \Else
                \State $\varepsilon_{low} \leftarrow \varepsilon_{mid}$
            \EndIf
        \EndWhile
        \State Construct SP matrix $\tilde{\Theta}$ from $(\overline{\Theta}, \prec)$ via \Cref{lem:asp-to-near-sp} with $\delta = 2K\varepsilon_{high}$
        \State $\tilde{\pi} \leftarrow \textsc{SP-Matching}(\tilde{\Theta})$
        \State Play $\tilde{\pi}$ for the remaining $T - NK$ rounds
    \end{algorithmic}
\end{algorithm}

We formally present the \textsc{NSP-EMC} algorithm (Algorithm \ref{alg:nsp-emc}), which adapts the explore-then-commit framework to non-single-peaked instances. The algorithm proceeds in three main phases. First, it collects empirical reward estimates through uniform exploration. Second, the algorithm performs a binary search to identify the minimal $\varepsilon$ required to recover a valid permutation from \textsc{Extract-Order}. Finally, it constructs a feasible SP matrix based on this minimal $\varepsilon$ and commits to the corresponding optimal matching for the duration of the horizon.

\PropNSPEMC*

\begin{proof}[\proofof{prop:nsp-emc}]
    Let $P^\star \in \mathcal{S}$ be an SP matrix such that $\|\Theta - P^\star\|_{\infty} = \gamma$. We again use the ``clean event" technique and condition on it for the rest of the analysis.

    \paragraph{Feasibility of the Binary Search.}
    By the triangle inequality and the definition of $\gamma$:
    \[
        \|\overline{\Theta} - P^\star\|_{\infty} \le \|\overline{\Theta} - \Theta\|_{\infty} + \|\Theta - P^\star\|_{\infty} \le \Delta_{est} + \gamma.
    \]
    Let $\varepsilon^\star = \Delta_{est} + \gamma$. Since $P^\star$ is an SP matrix (and thus valid with respect to some SP order $\prec_{P^\star}$), Lemma 5 guarantees that the call $\textsc{Extract-Order}(\overline{\Theta}, \varepsilon^\star)$ will succeed and return a valid order (or specifically, that the constraints imposed by $\varepsilon^\star$ are consistent with $\prec_{P^\star}$).
    Because the binary search in Algorithm \ref{alg:nsp-emc} finds the minimal feasible $\varepsilon$ (up to precision $\eta$) for which \textsc{Extract-Order} returns a valid order (note that this condition is indeed monotone in $\varepsilon$), the found parameter $\hat{\varepsilon}$ satisfies:
    \[
        \varepsilon_{high} \le \varepsilon^\star + \frac{2}{T}= \Delta_{est} + \gamma + \frac{2}{T}.
    \]
    We will omit the $\frac{2}{T}$ term for the rest of the analysis, as it is negligible in the final bound.

    \paragraph{Bounding the Approximation Error.}
    The algorithm produces an order $\prec$ using $\varepsilon_{high}$. According to Lemma 5, the empirical matrix $\overline{\Theta}$ is $(2K\varepsilon_{high})$-ASP with respect to $\prec$.
    Subsequently, the algorithm constructs $\tilde{\Theta}$ using \Cref{lem:asp-to-near-sp} with $\delta = 2K\varepsilon_{high}$. \Cref{lem:asp-to-near-sp} guarantees that $\tilde{\Theta}$ is SP with respect to $\prec$ and satisfies:
    \[
        \|\tilde{\Theta} - \overline{\Theta}\|_{\infty} \le 2K\varepsilon_{high}.
    \]
    Substituting the bound for $\varepsilon_{high}$ from Step 1:
    \[
        \|\tilde{\Theta} - \overline{\Theta}\|_{\infty} \le 2K(\Delta_{est} + \gamma).
    \]

    \paragraph{Total Value Gap and Regret.}
    We now bound the difference between the true optimal matching $\pi^\star$ and our committed matching $\tilde{\pi}$. Recall that $\tilde{\pi}$ is optimal for $\tilde{\Theta}$.
    \begin{align*}
        V(\pi^\star; \Theta) - V(\tilde{\pi}; \Theta)
         & = V(\pi^\star; \Theta) - V(\pi^\star; \tilde{\Theta}) + V(\pi^\star; \tilde{\Theta}) - V(\tilde{\pi}; \Theta)                                                                              \\
         & \le V(\pi^\star; \Theta) - V(\pi^\star; \tilde{\Theta}) + V(\tilde{\pi}; \tilde{\Theta}) - V(\tilde{\pi}; \Theta) \\
         & \le 2U \|\Theta - \tilde{\Theta}\|_{\infty}.
    \end{align*}
    Using the triangle inequality on the norm:
   \begin{align*}
        \|\Theta - \tilde{\Theta}\|_{\infty} &\le \|\Theta - \overline{\Theta}\|_{\infty} + \|\overline{\Theta} - \tilde{\Theta}\|_{\infty} \\
        &\le \Delta_{est} + 2K(\Delta_{est} + \gamma) \\
        &= (2K+1)\Delta_{est} + 2K\gamma.
    \end{align*}
    Thus, the per-round regret during the commitment phase is bounded by:
    \[
        V(\pi^\star; \Theta) - V(\tilde{\pi}; \Theta) \le 2U(2K+1)\Delta_{est} + 4UK\gamma.
    \]
    By substituting this bound into the regret and plugging in our choice of $N$, we get:
    \begin{align*}
        R_T &\le NKU + T \left( 2U(2K+1)\sqrt{\frac{2 \ln T}{N}} + 4UK\gamma \right) \\
        &\le \tilde{O}(UKT^{2/3} + \gamma UKT).
    \end{align*}
\end{proof}

\section{Handling Ties in Peak Indices}\label{sec:peak-ties}

In the classical study of single-peaked preferences in social choice theory, preferences are ordinal---each user provides a ranking over alternatives rather than cardinal utility values. In this ordinal setting, ties at the peak do not arise: the peak is simply the top-ranked alternative, which is unique by definition. However, in our cardinal framework, a user $u$ may have multiple indices $k$ that achieve the same maximal expected reward $\max_{k' \in [K]} \theta_{u,k'}$. When this occurs, the peak $p(u)$ as defined in \Cref{def:perfect-single-peaked-def} is not unique, and we must verify that our algorithms and analyses are robust to arbitrary tie-breaking choices.

\paragraph{Robustness of the Offline Algorithm.}
We begin by establishing that \Cref{lem:matching-given-subset} is robust to the arbitrary choice of peak when a user has several maximizers. Suppose user $u$ has two peak indices $x < y$ with $\theta_{u,x} = \theta_{u,y} = \max_{k \in [K]} \theta_{u,k}$, and fix a selected set $S = \{k_1 < k_2 < \cdots < k_m\}$. We claim that the value of the optimal matching of $u$ against $S$ is independent of whether we set $p(u) = x$ or $p(u) = y$. To see this, observe that exactly one of the following cases holds:
\begin{itemize}
    \item There exists $j \in \{1, \ldots, m-1\}$ with $k_j \leq x \leq y \leq k_{j+1}$ and no $k_\ell \in (x, y)$. By unimodality, any arm outside $\{k_j, k_{j+1}\}$ is dominated by one of them, so $\pi^\star(u) \in \{k_j, k_{j+1}\}$ regardless of the tie-break.
    \item $y \leq k_1$. Values are non-increasing to the right of the peaks, hence $\pi^\star(u) = k_1$ regardless of the tie-break.
    \item $k_m \leq x$. Values are non-decreasing to the left of the peaks, hence $\pi^\star(u) = k_m$ regardless of the tie-break.
    \item There exist indices $i \leq j$ with $x \leq k_i \leq \cdots \leq k_j \leq y$. Since all $k \in [x, y]$ attain the peak value by the single-peaked property, any $k \in \{k_i, \ldots, k_j\}$ is optimal. The lemma asserts $\pi^\star(u) \in \{k_{i-1}, k_i\}$ if the peak is set to $x$, and $\pi^\star(u) \in \{k_j, k_{j+1}\}$ if the peak is set to $y$. In either case, the attained value $\max_{k \in S} \theta_{u,k}$ remains unchanged.
\end{itemize}
In every case, user $u$'s contribution to the matching value is invariant to the choice of peak.

This invariance extends to \textsc{SP-Matching} (\Cref{alg:sp-matching}). Observe that every choice of the rightmost arm $k$ in Line~\ref{line:sp-matching:compute-V-star} corresponds to a subset of selected arms and a matching that assigns each user according to \Cref{lem:matching-given-subset}. By the argument above, the attained value of each user in this subset is independent of how we resolve peak ties. The choice of peak only affects \emph{when} we account for a user's contribution during the dynamic programming computation, not the contribution itself. Therefore, running \textsc{SP-Matching} with different valid peak indices yields the same optimal value $V^\star$.

\paragraph{Robustness of the Online Algorithm with Known Structure.}
For the \textsc{MvM} algorithm (\Cref{alg:match-via-max}), we must verify that the maximal matrix construction in \Cref{lem:maximal-matrix-in-confidence-set} remains valid when users have multiple peak indices. Suppose user $u$ has peak indices $p, p+1, \ldots, p+m$ for some $m \geq 0$, and this information is known to the algorithm. The maximal matrix consistent with this structure should assign the same value to all peak indices, equal to the minimum of their upper confidence bounds. Formally, one could define:
\[
    P^t_{u,k} =
    \begin{cases}
        \min_{i \in \{0,\ldots,m\}} \textnormal{UCB}_{u,p+i}(t), & k \in \{p, p+1, \ldots, p+m\} \\
        \min_{i: k \preceq i \preceq p+m} \textnormal{UCB}_{u,i}(t), & k \prec p \\
        \min_{i: p \preceq i \preceq k} \textnormal{UCB}_{u,i}(t), & p+m \prec k.
    \end{cases}
\]
However, for any choice of peak index $p + i$ with $i \in \{0, \ldots, m\}$, using the original definition from \Cref{lem:maximal-matrix-in-confidence-set} yields a matrix that element-wise upper bounds this modified construction. Thus, the regret analysis of \Cref{thm:match-via-max-regret} remains valid even if we use a slightly loose upper bound that may technically fall outside the confidence set. The key properties used in the analysis---that $P^t$ upper bounds any matrix in the confidence set and that $P^t$ is single-peaked with respect to $\prec$---continue to hold.

\paragraph{Unknown Structure Regime.}
For the algorithms in \Cref{sec:unknown-structure}, peak ties do not require special treatment. In this regime, peaks are not explicitly computed or used by the algorithm; they only appear implicitly through the call to \textsc{SP-Matching} in Line~\ref{line:etc:compute-optimal-matching} of \Cref{alg:etc}. Since we have already established that \textsc{SP-Matching} is invariant to the choice of peak indices, the analysis of \textsc{EMC} and its extensions remains valid regardless of how ties are resolved.

\section{Experimental Validation}\label{sec:simulations}
In this section, we present computational experiments to demonstrate the practicality of our algorithms and provide empirical validation of the theoretical regret bounds. \footnote{Our code is available at\\ \url{https://github.com/GurKeinan/code-for-Bandits-with-Single-Peaked-Preferences-and-Limited-Resources-paper}.}

\subsection{Simulation Details}

We now describe the synthetic setup used to evaluate our algorithms, including instance generation, implementation details, and experimental protocol.

\paragraph{Instance generation.} We constructed single-peaked expected reward matrices $\Theta$ using the following procedure: for each user $u$, we first sampled $K$ preference values independently from a uniform distribution over $[0.2, 0.9]$. We then selected a random peak location $p(u) \in [K]$ and arranged the values to create a unimodal preference profile---The largest sampled value was assigned to the peak, while the remaining values were sorted and distributed to create an increasing sequence up to the peak and a decreasing sequence afterward.

The actual rewards observed during algorithm execution were drawn from Bernoulli distributions, where $r_{u,k} \sim \text{Ber}(\theta_{u,k})$ for each user-arm pair. We set unit costs for all arms ($c_k = 1$ for all $k$) and used a budget constraint of $B = \lfloor K/2 \rfloor$, allowing selection of approximately half the available arms in each round.

\paragraph{Algorithmic implementation.}

The implementation of \textsc{EMC} and \textsc{MvM}  uses standard Python scientific computing libraries (NumPy, Pandas) without GPU acceleration, as the algorithms are primarily CPU-bound. The \textsc{SP-Matching} algorithm, used by both algorithms, was implemented using the dynamic programming approach described in \Cref{alg:sp-matching}. Additionally, for the PQ tree implementation, we used the SageMath library~\citep{sagemath}, a standard Python package for advanced mathematical computations.

\paragraph{Experimental setup.} We tested both algorithms on 10 random instances with $U = 100$ users and $K = 20$ arms. We estimated the expected regret using 10 independent runs for each instance. For the \textsc{EMC} algorithm, we tested time horizons ranging from $T = 10^5$ to $T = 10^6$ rounds. To simulate the unknown order setting, we randomly permuted the columns of each generated single-peaked matrix before running the algorithm. For the \textsc{MvM} algorithm, we executed a single run up to $T = 10^5$ rounds, recording the cumulative regret at each time step to observe the full regret trajectory. This method naturally serves as an upper bound on the regret that would be incurred if the algorithm were run separately for smaller values of $T$. However, we still observe the same predicted asymptotic behavior. Since this algorithm assumes a known order and peaks, we provided the true single-peaked structure directly without column permutation. 

The difference in the number of rounds $T$ for the two algorithms stems from the fact that \textsc{EMC} necessitates larger time horizons to ensure a sufficient number of commitment rounds relative to the exploration rounds.

\paragraph{Used hardware.} The complete simulation suite was executed on a standard MacBook Pro with 18 GB of RAM and 8 CPU cores. We utilized parallel processing across five cores to accelerate computation. The complete set of experiments required approximately 6 hours of computation time.

\subsection{Results}

\begin{figure*}[t]
    \centering
    \begin{subfigure}[b]{0.48\textwidth}
        \centering
        \includegraphics[width=\textwidth]{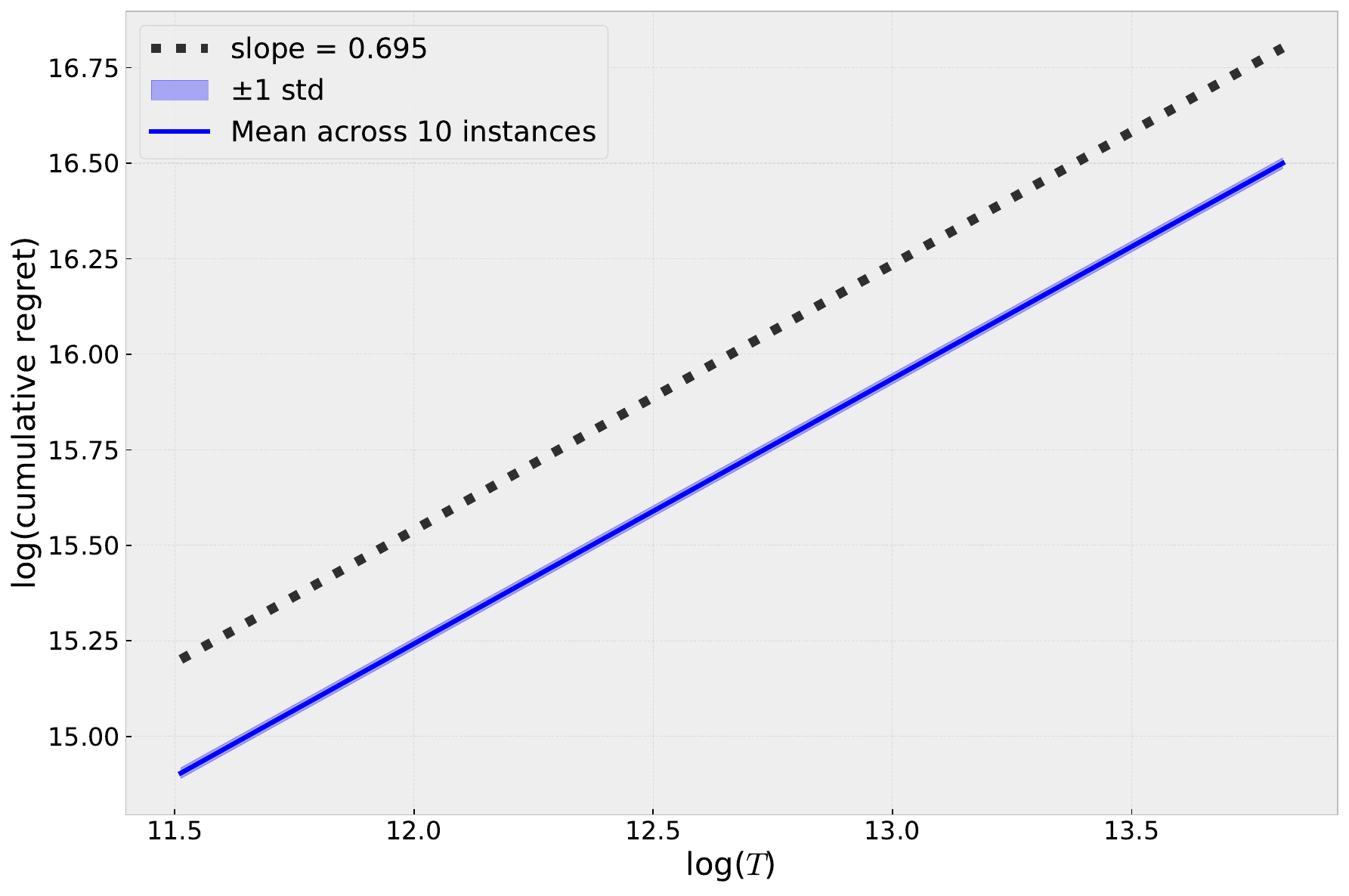}
        \caption{\textsc{EMC} Algorithm}
        \label{subfig:etc-results}
    \end{subfigure}
    \hfill
    \begin{subfigure}[b]{0.48\textwidth}
        \centering
        \includegraphics[width=\textwidth]{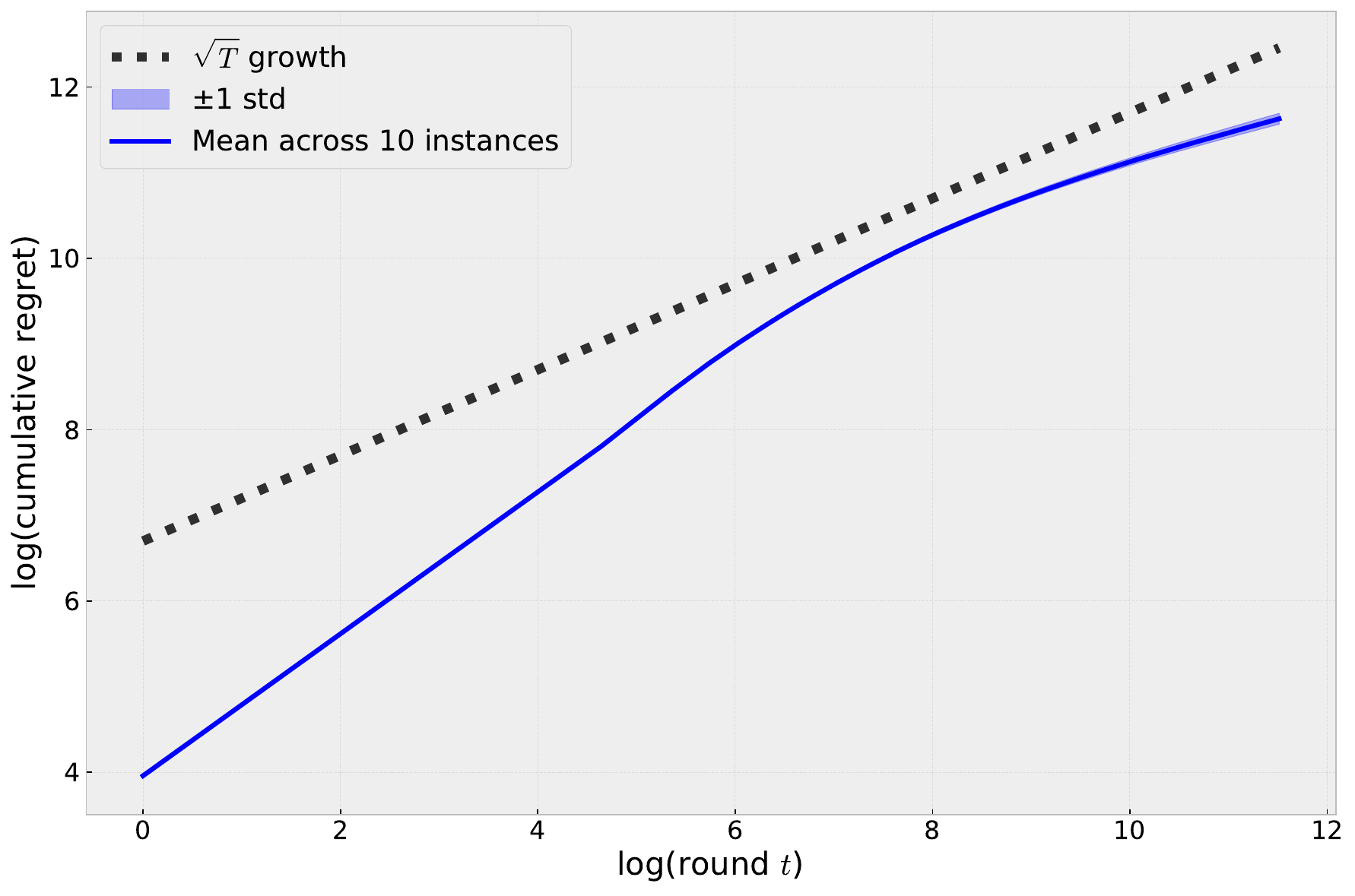}
        \caption{\textsc{MvM} Algorithm}
        \label{subfig:mvm-results}
    \end{subfigure}
    \caption{Log-log plots of cumulative regret versus time for both algorithms. Each plot shows the mean regret over all 10 instances (with 10 runs each) and shaded regions indicating standard deviation. The \textsc{EMC} algorithm (left) achieves a slope of approximately 0.69, approaching the theoretical guarantee of $2/3 \approx 0.67$. The \textsc{MvM} algorithm (right) demonstrates slopes below 0.5, consistent with the theoretical bound.}
    \label{fig:simulation-results}
\end{figure*}

Figure \ref{fig:simulation-results} presents log-log plots of cumulative regret versus time for both algorithms. Each plot displays the mean regret computed over all 10 instances (with 10 runs each), along with shaded regions representing the standard deviation. The narrow confidence bands demonstrate the consistency of our results across different problem instances and random seeds. Note that the two plots use different x-axis representations: the \textsc{EMC} plot (\Cref{subfig:etc-results}) shows $\log(T)$ where each point represents the total regret after $T$ rounds of interaction, while the \textsc{MvM} plot (\Cref{subfig:mvm-results}) shows $\log(t)$ with the cumulative regret trajectory at each round $t$.

\paragraph{The \textsc{EMC} algorithm.} As shown in \Cref{subfig:etc-results}, the algorithm exhibits consistent behavior, as evidenced by the nearly indistinguishable standard deviation bands. The fitted slope is approximately 0.694, approaching but not quite reaching the theoretical prediction of $2/3 \approx 0.667$ from our $\tilde{O}(UKT^{2/3})$ regret bound. When tested for larger time horizons, the empirical slope approaches the theoretical value of $2/3$, confirming the asymptotic prediction of our analysis.

\paragraph{The \textsc{MvM} algorithm.} As shown in \Cref{subfig:mvm-results}, the algorithm demonstrates excellent empirical performance with slopes ranging from 0.388 to 0.434 across different instances. All observed slopes fall strictly below the theoretical upper bound of 0.5, corresponding to our $\tilde{O}(U\sqrt{KT})$ regret bound. The variation in slopes across instances reflects the algorithm's adaptive nature: instances with more challenging preference structures result in steeper regret growth, which corresponds to slower learning. The small standard deviation bands indicate that the algorithm's performance is stable across multiple runs of the same instance. This gap between empirical performance and theoretical bounds suggests that our analysis may be conservative for typical problem instances.

\paragraph{Comparison.} The empirical slopes--0.694 for \textsc{EMC} versus 0.388-0.434 for \textsc{MvM}--highlight the cost of unknown order. While both algorithms achieve sublinear regret, the \textsc{MvM} algorithm's knowledge of the single-peaked structure enables significantly better asymptotic performance. The \textsc{EMC} algorithm's performance is hindered by the finite-horizon effects discussed above, but still achieves sublinear regret that would approach the theoretical rate with larger time horizons.

\section{Bipartite Budgeted Matching Variant}

In this appendix, we address a recommendation setting where each arm may serve at most one user.

\subsection{Problem Formulation}

We consider the same stochastic setting as in our main model
$(T,U,K,(D_{u,k})_{u,k},(c_k)_k,B)$, but with a different notion of a valid matching.

In our main model, multiple users may be assigned to the same arm in a round, and the budget constraint applies to the \emph{set} of distinct arms used. In the bipartite variant considered here, each user can be matched to at most one arm, and each arm can be matched to at most one user (one-to-one matching). For simplicity, we focus on the case of unit costs; the case of general costs remains an interesting problem for future work. In each round, the learner chooses a matrix $X \in \{0,1\}^{U\times K}$ satisfying
\[
    \sum_{k} x_{u,k} \le 1 \ \forall u,\qquad
    \sum_{u} x_{u,k} \le 1 \ \forall k,\qquad
    \sum_{u,k} x_{u,k} \le B,
\]
and receives an expected reward $V(\pi;\Theta) = \sum_{u,k} \Theta_{u,k} x_{u,k}$. Similarly to before, we denote by $\Pi^{\text{Bi}}$ the set of feasible matchings, where each matching corresponds to a matrix that satisfies these constraints. The only change from our main model is the feasible set of matchings; the stochastic structure and regret definition remain unchanged.

\subsection{Polynomial-Time Algorithm for the Offline Problem}

We start our analysis by showing that the offline budgeted bipartite matching problem can be solved in polynomial time using a reduction to either the Hungarian algorithm or a min-cost flow formulation.

\begin{proposition}\label{prop:bipartite-offline}
    We can solve $\argmax_{\pi \in \Pi^{\text{Bi}}} V(\pi; \Theta)$ in polynomial time.
\end{proposition}

\begin{proof}[\proofof{prop:bipartite-offline}]
    We divide our proof into two cases based on the relation between the budget and the number of users and arms.

    \paragraph{Case 1: $B \ge \min(U, K)$.} In this case, the budget constraint is non-binding, and we can simply find the maximum weight matching in the bipartite graph formed by users and arms. Formally, we first pad the smaller side of the bipartite graph with dummy nodes to make it square. W.l.o.g., assume $U \le K$; we add $K-U$ dummy users with $\Theta_{u,k} = 0$ for all arms $k$. We then apply the Hungarian algorithm to find the maximum weight matching in this square bipartite graph, which runs in $O(K^3)$ time. Since the padded users do not contribute to the reward, the resulting matching is optimal for our original problem.

    \paragraph{Case 2: $B < \min(U, K)$.} In this case, the budget constraint is binding, so we formulate the problem as a min-cost flow problem. We construct a flow network as follows:
    \begin{itemize}
        \item Create a source node $s$ and a sink node $t$.
        \item Create a node $A_u$ for each user $u$ and a node $B_k$ for each arm $k$.
        \item Add edges from $s$ to each user node $A_u$ and from each arm node $B_k$ to $t$ with capacity 1 and cost 0.
        \item For each user-arm pair $(u,k)$, add an edge from $A_u$ to $B_k$ with capacity 1 and cost $- \Theta_{u,k}$
        \item Node $s$ has a supply of $B$, and node $t$ has a demand of $B$.
        \item Since the total supply at $s$ is $B$ and the total demand at $t$ is $B$, any valid flow of value $B$ corresponds to selecting exactly $B$ edges between users and arms.
    \end{itemize}

    By the Integrality Theorem of minimum cost flow, since all arc capacities and node supplies/demands are integers, there exists an integer-valued optimal flow $f^*$. In our network, since capacities are 1, this implies the flow on any edge $(A_u, B_k)$ is either 0 or 1.
    We construct the matching $X$ by setting $x_{u,k} = 1$ if and only if the flow on edge $(A_u, B_k)$ is 1. The capacity constraints on edges $(s, A_u)$ ensure $\sum_k x_{u,k} \le 1$, and the capacity constraints on edges $(B_k, t)$ ensure $\sum_u x_{u,k} \le 1$. Finally, the total flow value of $B$ ensures $\sum_{u,k} x_{u,k} = B$.

    The total cost of the flow is $\sum_{u,k} x_{u,k}(-\Theta_{u,k}) = -\sum_{u,k} x_{u,k}\Theta_{u,k}$. Therefore, minimizing the cost is equivalent to maximizing the total reward $V(\pi;\Theta)$. Since min-cost flow can be solved in polynomial time, the offline bipartite budgeted matching problem is efficiently solvable.
\end{proof}

\subsection{Online Algorithm}

Since the offline optimization problem can be solved efficiently, we can employ standard Combinatorial MABs algorithms without relying on the single-peaked structure. Although we can adapt the algorithm from \Cref{cor:inefficient-online-regret} to this setting, we prefer to use the \textsc{CombUCB1} algorithm from \citet{kveton2015tight} and its guarantees with our efficient offline solver as the oracle.

\begin{corollary}\label{cor:bipartite-regret}
    There exists an efficient online algorithm for the bipartite budgeted matching problem that achieves $\tilde{O}(\sqrt{T})$ regret.
\end{corollary}

\begin{proof}[\proofof{cor:bipartite-regret}]
    We can apply \textsc{CombUCB1} using the offline solver from \Cref{prop:bipartite-offline} as the offline oracle. Since the reward function $V(\pi; \Theta)$ is linear in the matchings and the offline oracle is exact, the analysis of \citet{kveton2015tight} applies directly. The worst-case regret is bounded by $O(\sqrt{L K_{tot} T \ln T})$, where $L=UK$ is the number of total items (where item in this setting corresponds to choosing to match a user and item), and $K_{tot} = B$ is the maximum number of chosen items in a valid matching. Thus, the regret scales as $\tilde{O}(\sqrt{UKBT})$, and the per-round computational complexity is polynomial in $U, K$, and $B$.
\end{proof}

\subsection{Lower Bound}

We complement our algorithmic result with a matching lower bound, showing that the dependency on $U, K$, and $T$ is unavoidable.

\begin{theorem}\label{thm:bipartite-lower-bound}
    For the bipartite budgeted matching problem with $K \ge U$, any online algorithm incurs an expected regret of $\Omega(\sqrt{UKT})$.
\end{theorem}

\begin{proof}[\proofof{thm:bipartite-lower-bound}]
    Construct a hard instance where the set of arms $[K]$ is partitioned into $U$ disjoint subsets $S_1, \dots, S_U$, each of size $|S_u| \ge 2$. User $u$ receives rewards only from arms in $S_u$. Specifically, for each user $u$, one arm in $S_u$ yields Bernoulli rewards with mean $1/2 + \epsilon$, while others in $S_u$ have mean $1/2$. Arms outside $S_u$ yield 0. Since the subsets are disjoint and users are only compatible with their specific subsets, the global problem decomposes into $U$ independent multi-armed bandit instances. The constraints $\sum_k x_{u,k} \le 1$ and $\sum_u x_{u,k} \le 1$ are satisfied independently within each subset. Assuming sufficient global budget $B \ge U$, the total regret is the sum of regrets from $U$ independent bandits, each with $K/U$ arms over $T$ rounds.
    Using the standard minimax lower bound of $\Omega(\sqrt{K' T})$ for $K'$ arms:
    \[
        R(T) = \sum_{u=1}^U \Omega\left(\sqrt{\frac{K}{U} T}\right) = \Omega\left(U \sqrt{\frac{KT}{U}}\right) = \Omega\left(\sqrt{UKT}\right).
    \]
\end{proof}

\end{document}